\title{EIGNN: Efficient Infinite-Depth Graph Neural Networks}
\author{
Juncheng Liu \ \ \ \ Kenji Kawaguchi \ \ \ \ Bryan Hooi \ \ \ \ Yiwei Wang \ \ \ \ Xiaokui Xiao \\ 
National University of Singapore \\ 
\texttt{\{juncheng,kenji,bhooi\}@comp.nus.edu.sg} \\
\texttt{wangyw\_seu@foxmail.com}, \texttt{xkxiao@nus.edu.sg}
}
\begin{document}

\maketitle

\begin{abstract}
Graph neural networks (GNNs) are widely used for modelling graph-structured data in numerous applications. 
However, with their inherently finite aggregation layers, existing GNN models may not be able to effectively capture long-range dependencies in the underlying graphs. 
Motivated by this limitation, we propose a GNN model with infinite depth, which we call Efficient Infinite-Depth Graph Neural Networks (EIGNN), to efficiently capture very long-range dependencies. 
We theoretically derive a closed-form solution of EIGNN which makes training an infinite-depth GNN model tractable.
We then further show that we can achieve more efficient computation for training EIGNN by using eigendecomposition.
The empirical results of comprehensive experiments on synthetic and real-world datasets show that EIGNN has a better ability to capture long-range dependencies than recent baselines, and consistently achieves state-of-the-art performance.
Furthermore, we show that our model is also more robust against both noise and adversarial perturbations on node features.
\end{abstract}
\section{Introduction}
Graph-structured data are ubiquitous in the real world. 
To model and learn from such data, graph representation learning aims to produce meaningful node representations by simultaneously considering the graph topology and node attributes.
It has attracted growing interest in recent years, as well as numerous real-world applications \citep{wu2020comprehensive}. 

In particular, graph neural networks (GNNs) are a widely used approach for node, edge, and graph prediction tasks. 
Recently, many GNN models have been proposed (e.g., graph convolutional network \citep{semi_GCN}, graph attention network \citep{GAT}, simple graph convolution \citep{SGC}).
Most modern GNN models follow a ``message passing'' scheme:
they iteratively aggregate the hidden representations of every node with those of the adjacent nodes to generate new hidden representations, where each iteration is parameterized as a neural network layer with learnable weights. 

Despite the success existing GNN models achieve on many different scenarios, they lack the ability to capture long-range dependencies. 
Specifically, for a predefined number of layers $T$, these models cannot capture dependencies with a range longer than $T$-hops away from any given node.  
A straightforward strategy to capture long-range dependencies is to stack a large number of GNN layers for receiving ``messages'' from distant nodes.  
However, existing work has observed poor empirical performance when stacking more than a few layers
\citep{Deeper-insight-oversmooth}, which has been referred to as oversmoothing. This has been attributed to various reasons, including node representations becoming indistinguishable as depth increases.
Besides oversmoothing, GNN models with numerous layers require excessive computational cost in practice since they need to repeatedly propagate representations across many layers.
For these two reasons, simply stacking many layers for GNNs is not a suitable way to capture long-range dependencies.

Recently, instead of stacking many layers, several works have been proposed to capture long-range dependencies in graphs. 
\citet{Pei2020Geom-GCN} propose a global graph method Geom-GCN which builds structural neighborhoods based on the graph and the embedding space, and uses a bi-level aggregation to capture long-range dependencies.
However, as Geom-GCN still only has finite layers, it still fails to capture very long range dependencies.

To model longer range dependencies, \citet{IGNN} propose the implicit graph neural network (IGNN), which can be considered as a GNN model with infinite layers. 
Thus, IGNN does not suffer from any \textit{a priori} limitation on the range of information it can capture.
To achieve this, IGNN generates predictions as the solution to a fixed-point equilibrium equation. 
Specifically, the solution is obtained by an iterative solver. Moreover, they require additional conditions to ensure well-posedness of the model and use projected gradient descent method to train the model.
However, the practical limitations of iterative solvers are widely recognized: they can lack robustness; the generated solution is approximated; and the number of iterations cannot be known in advance \citep{saad2003iterative}. 
In our experiments, we found that IGNN sometimes experiences non-convergence in its iterative solver. 
Besides the non-convergence issue, the iterative solver of IGNN can be inefficient as IGNN needs to run it once per forward or backward pass. 

Motivated by these limitations, we propose our Efficient Infinite-Depth Graph Neural Network (EIGNN) approach, which can effectively capture very long range dependencies in graphs. 
Instead of relying on iterative solvers to generate the solution like in IGNN \citep{IGNN}, we derive a closed-form solution for EIGNN without additional conditions, avoiding the need for projected gradient descent to train the model. 
Furthermore, we propose to use eigendecomposition to improve the efficiency of EIGNN, without affecting its accuracy.

The contributions of this work are summarized as follows: 
\setlist{leftmargin=*}
\begin{itemize}
	\item To capture long-range dependencies, we propose our infinite-depth EIGNN model. To do this, we first define our model as the limit of an infinite sequence of graph convolutions, and theoretically prove its convergence. Then, we derive tractable forward and backward computations for EIGNN. 
	\item We then further derive an eigendecomposition-based approach to improve the computational/memory complexity of our approach, without affecting its accuracy.
	\item We empirically compare our model to recent baseline GNN models on synthetic and real-world graph datasets. The results show that EIGNN has a better ability to capture very long range dependencies and provides better performance compared with other baselines. Moreover, the empirical results of noise sensitivity experiments demonstrate that EIGNN is more robust against both noise and adversarial perturbations.
\end{itemize}

\paragraph{Paper outline}
In Section \ref{sec: related work}, we provide an overview of GNNs, implicit models, and the oversmoothing problem.
Section \ref{sec: preliminaries} introduces the background and major mathematics symbols used in this paper. 
In Section \ref{sec: methodology}, we first present the EIGNN model and discuss how to train this infinitely deep model in practice. In addition, we show that with eigendecomposition we can reduce the complexity and achieve more efficient computation for EIGNN.
In Section \ref{sec: experiments}, we empirically compare EIGNN with other representative GNN methods.
\section{Related work}
\label{sec: related work}
\paragraph{Graph neural network models}
GNN models have been successfully used on various graph related tasks \citep{wu2020comprehensive,zhang2020deep, wang2020nodeaug, nips_2021_wyw}.
Pioneered by Graph Convolutional Networks (GCNs) \citep{semi_GCN}, many convolutional GNN models \citep{JKNet,SGC,GAT,GCNII,APPNP,Graphsage, wang2021mixup} use different aggregation schemes and components (e.g., attention\citep{GAT}, skip connection\citep{JKNet, GCNII}). 
They only have a finite number of aggregation layers, which makes them unable to capture long-range dependencies. 
Recurrent GNN models \citep{scarselli2008graph,gori2005new,gated_GNN,pmlr-v80-dai18a} generally share the same parameters in each aggregation step and potentially allow infinite steps until convergence. 
However, as pointed out in \citet{IGNN}, their sophisticated training process and conservative convergence conditions limit the use of these previous methods in practice. 
Recently, \citet{IGNN} proposes an implicit graph neural network (IGNN) with infinite depth. 
IGNN uses an iterative solver for forward and backward computations, and requires additional conditions to ensure the convergence of the iterative method. 
With infinite depth, they aim to capture very long range dependencies on graphs.  
Another global method Geom-GCN \citep{Pei2020Geom-GCN} is also proposed for capturing long-range dependencies using structural neighborhoods based on the graph and the embedding space. 
However, as Geom-GCN only has finite layers, it fails to capture very long range dependencies.
\paragraph{Implicit models}
Implicit networks use implicit hidden layers which are defined through an equilibrium point of an infinite sequence of computation. This makes an implicit network equivalent to a feedforward network with infinite depth. 
Several works \citep{NEURIPS2019_01386bd6, NEURIPS2020_3812f9a5, NEURIPS2018_69386f6b, el2019implicit} show the potential advantages of implicit models on many applications, e.g., language modeling, image classification, and semantic segmentation. 
Besides practical applications, \citet{kawaguchi2021theory} provides a theoretical analysis on the global convergence of deep implicit linear models.
\paragraph{Oversmoothing}
For capturing long-range dependencies, a straightforward method is to stack more GCN layers. However, \citet{Deeper-insight-oversmooth} found that stacking many layers make the learned node representations indistinguishable, which is named the \textit{oversmoothing} phenomenon. 
To mitigate oversmoothing and allow for deeper GNN models, several empirical and theoretical works \citep{Deeper-insight-oversmooth, APPNP, GCNII, zhao2020pairnorm, Oono2020Graph} have been proposed.
However, these models still cannot effectively capture long-range dependencies, as shown in our empirical experiments (see Section \ref{sec: experiments}).

\section{Preliminaries}
\label{sec: preliminaries}
Let $\mathcal{G}=(\mathcal{V}, \mathcal{E})$ be an undirected graph with node set $\mathcal{V}$ and edge set $\mathcal{E}$, where the number of nodes $n = |\mathcal{V}|$.
In practice, a graph $\mathcal{G}$ can be represented as its adjacency matrix $A \in \mathbb{R}^{n \times n}$ and its node feature matrix $X \in \mathbb{R}^{m \times n}$ where the feature vector of node $i$ is $x_i \in \RR^{m}$.
In node classification task, given graph data $(\mathcal{G}, X)$, graph models are required to produce the prediction $\hat{y_i}$ for node $i$ to match the true label $y_i$.

\paragraph{Simple graph convolution} The Simple Graph Convolution (SGC)  was recently proposed by \citet{SGC}. The pre-softmax output of SGC  of depth $H$ can be written as:
\begin{equation}
	f_{SGC}(X,W)=WXS^{H} \in \RR^{m_y \times n},
\end{equation}
where $X \in \RR^{m \times n}$ is the node feature matrix, $S^{H}=\prod_{i=1}^H S\in \RR^{n \times n}$ is the product of $H$ normalized adjacency matrix with added self-loops $S\in \RR^{n \times n}$, and $W \in \RR^{m_y \times m}$ is the matrix of the trainable weight parameters. 
After the graph convolution, the softmax operation is applied to obtain the prediction $\hat{y}$.
\paragraph{Notation}
We use $\otimes$ to represent Kronecker product. $\circ$ is used to represent element-wise product between two matrices with the same shape.  
For a matrix $V \in \RR ^{x \times y}$, by stacking all columns, we can get $\vect[V] \in \RR ^{xy}$ as the vectorized form of $V$. 
The Frobenius norm of $V$ is denoted as $\|V\|_{\Fmathrm}$. $\|V\|_2$ denotes the 2-norm.
We use $I_n$ to represent the identity matrix with size $n \times n$.

\section{Efficient Infinite-Depth Graph Neural Networks}
\label{sec: methodology}
To capture long-range dependencies, we propose Efficient Infinite-Depth Graph Neural Networks (EIGNN). 
The pre-softmax output of EIGNN is defined as:
\begin{equation}
	\label{eq: f with limit z}
	f(X,F,B)=B \left(\lim_{H \rightarrow \infty } Z^{(H)}_{X,F}\right), 
\end{equation}
where $F\in \RR^{m \times m}$ and $B\in \RR^{m_y \times m}$ represent the trainable weight parameters, and $Z^{(H)}=Z^{(H)}_{X,F}$ is the output of the $H$-th hidden layer:
\begin{equation}
\label{eq: iterating Z}
Z^{(l+1)} = \gamma g(F) Z^{(l)}S+X   
\end{equation}
where we are given an arbitrary $\gamma\in(0,1]$ and 
\begin{equation}
\label{eq: gF}
g(F) = \frac{1}{\|F\T F\|_{\Fmathrm} + \epsilon_F} F\T F 
\end{equation}
with an arbitrary small $\epsilon_F>0$. 

Note that $g(F)$ is constrained by Equation \eqref{eq: gF} to lie within a Frobenius norm ball of radius $<1$, which prevents divergence of the infinite sequence. 
EIGNN extends SGC to an infinite depth model with learnable propagation while additionally adding skip connections.
By the designs, EIGNN have several helpful properties: 1) residual connection, which is also used in finite-depth GNN models (e.g., APPNP \citep{APPNP} and GCNII \citep{GCNII}) and has shown its importance in graph learning; 2) learnable weights for propagation on graphs (i.e., g(F)). In contrast, APPNP \citep{APPNP} only directly propagates information without learnable weights. 

As EIGNN is a model with infinite depth, how to conduct forward and backward computation for training is not obvious. 
In the rest of this section, we first show how to perform forward and backward computation for EIGNN.
Then we propose to use eigendecomposition to improve the efficiency of this computation. 
\subsection{Forward and backward computation}
At first glance, it may seem that we cannot compute the infinite sequence of $(Z^{(l)})_l$ without iterative solvers in practice. 
However, in Proposition \ref{proposition: lim z}, we show that the infinite sequence of $(Z^{(l)})_l$ is guaranteed to converge and is computable without iterative solvers. 
\begin{proposition}
	\label{proposition: lim z}
	Given any matrix X, F and normalized symmetric adjacency matrix S, the infinite sequence of $(Z^{(l)})_l$ is convergent and the limit of the sequence can be written as follows: 
	\begin{equation}
		\label{eq: proposition1}
		\lim_{H \rightarrow \infty}\vect[Z^{(H)}] =\left(I-\gamma [S \otimes g(F)]  \right)^{-1}  \vect[X].
	\end{equation}
\end{proposition}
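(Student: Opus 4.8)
The plan is to vectorize the update \eqref{eq: iterating Z}, recognize the resulting iteration as an affine map whose linear part is a strict contraction, and then read off the limit as a Neumann series. The only place where the specific structure of $S$ and of $g(F)$ matters is in establishing the contraction factor.

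First I would apply the standard identity $\vect[AXB]=(B\T\otimes A)\,\vect[X]$ to \eqref{eq: iterating Z}. Since $S$ is symmetric we have $S\T=S$, so, writing $z^{(l)}=\vect[Z^{(l)}]$, $x=\vect[X]$, and $M=\gamma\,[S\otimes g(F)]$, the recursion collapses to the affine fixed-point iteration $z^{(l+1)}=M z^{(l)}+x$. Unrolling from any initialization $z^{(0)}$ gives $z^{(l)}=M^{l}z^{(0)}+\sum_{k=0}^{l-1}M^{k}x$, so it suffices to show that $M^{l}\to 0$ and that $\sum_{k\geq 0}M^{k}$ converges; both are immediate once $\|M\|_2<1$.

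To bound $\|M\|_2$ I would use the Kronecker identity $\|A\otimes B\|_2=\|A\|_2\,\|B\|_2$, giving $\|M\|_2=\gamma\,\|S\|_2\,\|g(F)\|_2$. Because $S$ is the symmetrically normalized adjacency matrix (with self-loops), it is symmetric with spectrum contained in $[-1,1]$, hence $\|S\|_2\leq 1$; and by the definition \eqref{eq: gF}, $\|g(F)\|_2\leq \|g(F)\|_{\Fmathrm}=\|F\T F\|_{\Fmathrm}/(\|F\T F\|_{\Fmathrm}+\epsilon_F)<1$. Combined with $\gamma\in(0,1]$ this yields $\|M\|_2<1$, so $1$ is not an eigenvalue of $M$, the matrix $I-M$ is invertible, $M^{l}\to 0$, and the Neumann series converges with $\sum_{k\geq 0}M^{k}=(I-M)^{-1}$. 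Therefore $z^{(l)}\to (I-M)^{-1}x$, independently of $z^{(0)}$, and since vectorization is a linear bijection this establishes convergence of $(Z^{(l)})_l$ together with the closed form \eqref{eq: proposition1}.

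The main obstacle is the spectral bound $\|M\|_2<1$: one must invoke the structural fact that the normalized adjacency $S$ has operator norm at most one (this is a property of that particular normalization, not of arbitrary symmetric matrices) and combine it with the Frobenius-versus-$2$-norm comparison that the definition of $g(F)$ forces. Once $\|M\|_2<1$ is in hand, everything else is the routine Neumann-series argument.
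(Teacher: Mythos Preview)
Your proposal is correct and follows essentially the same route as the paper: vectorize the recursion, use $\|A\otimes B\|_2=\|A\|_2\|B\|_2$ together with $\|S\|_2\le 1$ and $\|g(F)\|_2\le\|g(F)\|_{\Fmathrm}<1$ to get $\|M\|_2<1$, and conclude via the Neumann series. The paper carries out the Cauchy-sequence bookkeeping and the invertibility of $I-M$ in more detail (via positive definiteness), but the substance is identical; your version is, if anything, slightly cleaner in that it explicitly absorbs an arbitrary initialization $z^{(0)}$ through the $M^{l}z^{(0)}\to 0$ term.
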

This can be proved by using the triangle inequality of a norm and some properties of the Kronecker product. The complete proof can be found in Appendix \ref{proof: lim z}.
\paragraph{Computing  $f(X,F,B)$} 
After getting the limit of the sequence, we can conduct forward computation without iterative solvers as shown in Proposition \ref{prop:1}. 
In contrast, IGNN \citep{IGNN} heavily relies on iterative solvers for both forward and backward computation.
The motivation to avoid iterative solvers is that iterative solvers suffer from some commonly known issues, e.g., approximated solutions and sensitive convergence criteria \citep{saad2003iterative}.
\begin{proposition} \label{prop:1}
	Given any matrices X, F and normalized symmetric adjacency matrix S, with an arbitrary $\gamma \in (0,1]$ and an arbitrary small $\epsilon_F>0$, $f(X,F,B)$ of Equation \eqref{eq: f with limit z} can be obtained without iterative solvers: 
	\begin{equation}
		\vect[f(X,F,B)]=\vect\left[B \left(\lim_{H \rightarrow \infty } Z^{(H)}\right) \right]=[I_{n} \otimes B ]\left(I-\gamma [S \otimes g(F)]  \right)^{-1}  \vect[X].
		\label{eq: vanilla forward}
	\end{equation}
\end{proposition}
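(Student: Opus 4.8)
The plan is to reduce Proposition~\ref{prop:1} to Proposition~\ref{proposition: lim z} together with one standard vectorization identity for the Kronecker product, so that no new analytic work is required. Set $Z := \lim_{H\to\infty} Z^{(H)} \in \RR^{m\times n}$; this limit exists because $(Z^{(l)})_l$ is convergent by Proposition~\ref{proposition: lim z}, and by the definition in Equation~\eqref{eq: f with limit z} we have $f(X,F,B) = BZ$, hence $\vect[f(X,F,B)] = \vect[BZ]$, which is the first equality in the statement.

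Next I would apply the identity $\vect[PQR] = (R\T \otimes P)\,\vect[Q]$ with $P = B$, $Q = Z$, and $R = I_n$. Since $BZ = B Z I_n$, this gives $\vect[BZ] = (I_n\T \otimes B)\,\vect[Z] = (I_n \otimes B)\,\vect[Z]$. Finally I would substitute the closed-form expression of Proposition~\ref{proposition: lim z}, namely $\vect[Z] = \left(I - \gamma [S\otimes g(F)]\right)^{-1}\vect[X]$; the inverse appearing here is legitimate because the convergence of $(Z^{(l)})_l$ established in Proposition~\ref{proposition: lim z} is precisely the convergence of the associated Neumann series $\sum_{k\ge 0}\gamma^k [S\otimes g(F)]^k$, which forces $I - \gamma[S\otimes g(F)]$ to be nonsingular. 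Chaining the equalities then yields
\[
\vect[f(X,F,B)] = (I_n\otimes B)\left(I - \gamma[S\otimes g(F)]\right)^{-1}\vect[X],
\]
which is the claim.

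I do not anticipate a genuine obstacle here; the one place that needs care is the vectorization convention. Under column-stacking (the convention fixed in the Preliminaries), the matrix multiplying $Z$ on the \emph{left} becomes the \emph{right} factor of the Kronecker product, so one obtains $I_n\otimes B$ rather than $B\otimes I_n$ or $B\T\otimes I_n$; getting this backwards is the only realistic source of error. As a consistency check one verifies dimensions: $B\in\RR^{m_y\times m}$ gives $I_n\otimes B\in\RR^{nm_y\times nm}$, which is exactly what is needed to act on $\vect[X]\in\RR^{nm}$ and to land in $\RR^{nm_y}$, the space of vectorizations of $m_y\times n$ matrices, matching $f(X,F,B)\in\RR^{m_y\times n}$.
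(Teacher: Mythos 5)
Your proposal is correct and follows exactly the same route as the paper: take the limit from Proposition~\ref{proposition: lim z}, apply the column-stacking identity $\vect[BZ] = (I_n \otimes B)\vect[Z]$, and substitute the closed form $\vect[Z] = (I-\gamma[S\otimes g(F)])^{-1}\vect[X]$. Your added remarks on the invertibility coming from the Neumann series and on the dimension check are consistent with what the paper establishes in its proof of Proposition~\ref{proposition: lim z}.
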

\begin{proof}
	Combining Equation \eqref{eq: proposition1} with the property of the vectorization (i.e., $\vect[AB] = (I_m \otimes A) \vect[B]$), Equation \eqref{eq: vanilla forward} can be  obtained. 
\end{proof}

\paragraph{Computing $\frac{\partial (\lim_{H \rightarrow \infty}\vect[Z^{(H)}_{X,F}])}{\partial\vect[F]}$}
In Proposition \ref{proposition: the gradients of the sequence}, we show that backward computation can be done without iterative solvers as well. 
Without iterative solvers in both forward and backward computation,  we can avoid the issue that the solution of iterative methods is not exact. 
Specifically, as in IGNN \citep{IGNN}, the forward computation can yield an error through the iterative solver and the backward pass with the iterative solver can amplify the error from the forward computation. The approximation errors can lead to degradation performance, which we can avoid in our method. 
\begin{proposition}
	\label{proposition: the gradients of the sequence} 
	Given any matrices X, F and normalized symmetric adjacency matrix S,
	assuming an arbitrary $\gamma \in (0,1]$ and an arbitrary small $\epsilon_F > 0$, the gradient $\frac{\partial (\lim_{H \rightarrow \infty}\vect[Z^{(H)}_{X,F}])}{\partial\vect[F]}$ can be computed without iterative solver: 
	\begin{equation}
        \frac{\partial (\lim_{H \rightarrow \infty}\vect[Z^{(H)}_{X,F}])}{\partial\vect[F]}  =\gamma U^{-1} \left[S \left(\lim_{H \rightarrow \infty}Z^{(H)}_{X,F}\right)\T \otimes I_{m} \right]\frac{\partial \vect[g(F)] }{\partial  \vect[F]},
		\label{eq: the gradients of the sequence}
	\end{equation}
    where $U=I - \gamma [S \otimes g(F)]$ and the limit can be computed by 
    $$
    \lim_{H \rightarrow \infty}\vect[Z^{(H)}_{X,F}]=U^{-1}  \vect[X].
    $$
\end{proposition}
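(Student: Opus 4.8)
The plan is to sidestep differentiating the infinite sequence term by term and instead differentiate the closed form supplied by Proposition~\ref{proposition: lim z}. Write $z(F) := \vect\bigl[\lim_{H\to\infty} Z^{(H)}_{X,F}\bigr]$ and $U(F) := I - \gamma[S\otimes g(F)]$. Proposition~\ref{proposition: lim z} already gives the identity $U(F)\, z(F) = \vect[X]$ together with (via the Neumann-series estimate behind its proof) the invertibility of $U(F)$; since $g$ is a smooth rational function of the entries of $F$ — its denominator $\|F\T F\|_{\Fmathrm} + \epsilon_F$ is bounded below by $\epsilon_F>0$ — the map $F \mapsto z(F) = U(F)^{-1}\vect[X]$ is itself smooth, so $\partial z/\partial\vect[F]$ exists and coincides with the derivative of the closed form. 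This observation replaces any delicate ``interchange of limit and derivative'' argument.

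Next I would differentiate the identity $U(F)\,z(F) = \vect[X]$ implicitly. For a scalar coordinate $\theta_k$ of $\vect[F]$ the product rule gives $(\partial U/\partial\theta_k)\,z + U\,(\partial z/\partial\theta_k) = 0$, hence $\partial z/\partial\theta_k = -\,U^{-1}(\partial U/\partial\theta_k)\,z$. Because $U = I - \gamma[S\otimes g(F)]$ and $S$ is independent of $F$, we have $\partial U/\partial\theta_k = -\gamma\,[S\otimes(\partial g(F)/\partial\theta_k)]$, so $\partial z/\partial\theta_k = \gamma\,U^{-1}\,[S\otimes(\partial g(F)/\partial\theta_k)]\,z$.

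Then I would convert the Kronecker action on $z = \vect[Z]$ (with $Z := \lim_{H\to\infty}Z^{(H)}_{X,F}$) into ordinary matrix products by applying the identity $\vect[AXB] = (B\T\otimes A)\vect[X]$ twice and using $S = S\T$: first with $A = \partial g(F)/\partial\theta_k$ and $B = S$ to obtain $[S\otimes(\partial g(F)/\partial\theta_k)]\vect[Z] = \vect[(\partial g(F)/\partial\theta_k)\,Z S]$; then, viewing $\partial g(F)/\partial\theta_k$ as the ``variable'' matrix and $ZS$ as the fixed right factor, $\vect[(\partial g(F)/\partial\theta_k)(ZS)] = \bigl((ZS)\T\otimes I_m\bigr)\vect[\partial g(F)/\partial\theta_k] = \bigl(S Z\T\otimes I_m\bigr)\,\partial\vect[g(F)]/\partial\theta_k$. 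Substituting back gives $\partial z/\partial\theta_k = \gamma\,U^{-1}\bigl(SZ\T\otimes I_m\bigr)\,\partial\vect[g(F)]/\partial\theta_k$, and stacking these columns over all coordinates $\theta_k$ of $\vect[F]$ yields exactly Equation~\eqref{eq: the gradients of the sequence}; the trailing claim $\vect[Z] = U^{-1}\vect[X]$ is just a restatement of Proposition~\ref{proposition: lim z}.

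The main obstacle is not any single computation but keeping the bookkeeping consistent: the two uses of the vec identity must pick the right factor as ``fixed'' versus ``varying,'' and the symmetry $S=S\T$ has to be invoked in the right places, or one ends up with a spurious $S\T Z\T$ or a transposed Kronecker factor. A secondary point worth a sentence is well-definedness of $\partial\vect[g(F)]/\partial\vect[F]$ itself: since $g(F) = (\|F\T F\|_{\Fmathrm}+\epsilon_F)^{-1}F\T F$ and the denominator never vanishes, this Jacobian exists for every $F$ (it can be written out explicitly via the quotient rule, but this is not needed for the statement).
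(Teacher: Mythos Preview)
Your proposal is correct and arrives at the same computation as the paper. The only difference is presentational: the paper packages the implicit-differentiation step as an application of the implicit function theorem to $\varphi(\vect[F],\vect[Z])=\vect[Z]-\gamma[S\otimes g(F)]\vect[Z]-\vect[X]$, whereas you first observe that the closed form $z(F)=U(F)^{-1}\vect[X]$ is already smooth (so existence of the derivative is immediate) and then differentiate the relation $U(F)z(F)=\vect[X]$ by hand. Your route is marginally more elementary since it sidesteps citing the IFT, but the resulting formula $-U^{-1}(\partial U/\partial\theta_k)z$ and the subsequent vec manipulations are identical to the paper's.
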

The proof of Proposition \ref{proposition: the gradients of the sequence} is strongly related to the implicit function theorem. See Appendix \ref{proof: the gradients of the sequence} for the complete proof. 
\paragraph{Computing the gradients of objective functions}
With the gradients of the infinite sequence, the gradients of objective functions can be obtained. 
Define the objective function to be minimized by:
$$
L(B,F)= \ell_Y(f(X,F,B)).
$$
Here,  $\ell_Y$ is an arbitrary  differentiable function; for example, it can be set to the cross-entropy loss or square loss with the label matrix $Y \in \RR^{m_y \times n}$. 
Let $\mathbf{f}_{X,F,B}=f(X,F,B)$ and $Z_{X,F}=\lim_{H \rightarrow \infty}Z^{(H)}_{X,F}$. 
Then using the chain rule with Equation \eqref{eq: the gradients of the sequence}, we have 
\begin{equation}
	\label{eq: vanilla gradients of F}
	\frac{\partial L(B,F)}{\partial \vect[F]} =\gamma \frac{\partial \ell_Y(\mathbf{f}_{X,F,B}) }{\partial \vect[\mathbf{f}_{X,F,B}]}[I_n \otimes B]U^{-1} \left[SZ_{X,F}\T\otimes I_{m} \right]\frac{\partial \vect[g(F)] }{\partial  \vect[F]}
\end{equation}
and
\begin{equation}
	\label{eq: vanilla gradients of B} 
	\frac{\partial L(B,F)}{\partial \vect[B]} = \frac{\partial \ell_Y(\mathbf{f}_{X,F,B}) }{\partial \vect[\mathbf{f}_{X,F,B}]}  [Z_{X,F}\T\otimes I_{m_y}],
\end{equation}
where $Z_{X,F} = (I - \gamma [S \otimes g(F)])\vect[X]$. 
The full derivation of Equation \eqref{eq: vanilla gradients of F} and \eqref{eq: vanilla gradients of B} can be found in Appendix \ref{Derivations of the graidents of objective functions}.

\subsection{More efficient computation via eigendecomposition}
In the previous subsection, we show that even with the infinite depth of EIGNN, forward and backward computations can be conducted without iterative solvers. 
However, this requires us to calculate and store the matrix $U= I - \gamma [S \otimes g(F)] \in \RR^{mn \times mn}$, which can be expensive in terms of memory consumption. For example, when $m=100, n=1000$, the memory requirement of $U$ becomes around 40 GB which usually cannot be handled by a commodity GPU. 
In this subsection, we show that we can achieve more efficient computation via eigendecomposition.
We avoid the matrix $U$ by using eigendecomposition of much smaller matrices $g(F) = Q_F \Lambda _F Q_F\T\in \RR^{m \times m}$ and $S = Q_S \Lambda_S Q_S\T\in \RR^{n \times n}$ separately.
Therefore, we reduce the memory complexity from an $mn \times mn$ matrix to two matrices with sizes $m\times m$ and $n\times n$ respectively.
We first define $G \in \RR^{m \times n}$ where $G_{ij} = 1/(1-\gamma (\bar \Lambda _F \bar \Lambda_S\T)_{ij})$.
The output of EIGNN and the gradient with respect to $F$ can be computed as follows: 
\begin{equation}
	\label{eq: f_final}
	f(X,F,B) =BQ_F  (G \circ (Q_F\T XQ_S))Q_S\T,
\end{equation}
\begin{equation}
	\label{eq: gradients of F (eigen_v1)}
	\nabla_{\vect[F]}  L(B,F)= \gamma\left(\frac{\partial \vect[g(F)] }{\partial  \vect[F]} \right)\T \vect\left[Q_F\left(G \circ\left( Q_F\T B\T  \frac{\partial \ell_Y(\mathbf{f}_{X,F,B}) }{\partial \mathbf{f}_{X,F,B}} Q_{S}\right) \right) Q_S\T SZ_{X,F}\T   \right],
\end{equation}
where $Z_{X,F} = Q_F  (G \circ (Q_F\T XQ_S))Q_S\T \in \RR^{m \times n}$.
The complete derivations of Equation \eqref{eq: f_final} and \eqref{eq: gradients of F (eigen_v1)} are presented in Appendix \ref{appendix: derivations eigen_v1}.
\paragraph{Further improvements via avoiding $\frac{\partial \vect[g(F)] }{\partial  \vect[F]}$} 
As shown in Equation \eqref{eq: f_final} and \eqref{eq: gradients of F (eigen_v1)}, we can avoid the $mn \times mn$ matrix, but this still requires us to deal with the $mm \times mm$ matrix $\frac{\partial \vect[g(F)] }{\partial  \vect[F]}$ in Equation \eqref{eq: gradients of F (eigen_v1)}. 
We show that we can further avoid this as well. 
Therefore, the memory complexity for the computation of the gradient with respect to $F$ reduces from $O(m^4)$ to $O(n^2)$ if $n > m$, or $O(m^2)$ otherwise.
\begin{equation}
	\label{eq: gradients of F (eigen_v2)}
	\nabla_{F}  L(B,F) =\frac{\gamma}{\|F\T F\|_{\Fmathrm}+ \epsilon_F} F\left( (R   +R\T) -\frac{2   \left\langle F\T F ,  R \right\rangle _{\Fmathrm}}{\|F\T F\|_{\Fmathrm}^2+ \epsilon_F\|F\T F\|_{\Fmathrm}} F\T F  \right) \in \RR^{m \times m}, 
\end{equation}  
where 
$$
R=Q_F\left(G \circ\left( Q_F\T B\T  \frac{\partial \ell_Y(\mathbf{f}_{X,F,B}) }{\partial \mathbf{f}_{X,F,B}} Q_{S}\right) \right) Q_S\T SZ_{X,F}\T \in \RR^{m \times m}.
$$

Appendix \ref{appendix: derivation (eigen_v2)} shows the full derivation of Equation \eqref{eq: gradients of F (eigen_v2)}.
Note that $\frac{\partial \ell_Y(\mathbf{f}_{X,F,B}) }{\partial \mathbf{f}_{X,F,B}}$ can be easily calculated by modern autograd frameworks (e.g., PyTorch \citep{Pytorch}). For the gradient $\nabla_{B}  L(B,F)$, it can be directly obtained by applying chain rule with $\frac{\partial \ell_Y(\mathbf{f}_{X,F,B}) }{\partial \mathbf{f}_{X,F,B}}$ since $B$ is not within the infinite sequence. Thus, we omit the formula here. 

\subsection{Comparison with other GNNs with infinite depth}
Comparing with IGNN \cite{IGNN}, our model EIGNN can directly obtain a closed-form solution rather than relying on an iterative solver. Thus, we avoid some common issues of iterative solvers \citep{saad2003iterative}, e.g., approximated solutions and sensitivity to hyper-parameters.
IGNN requires additional conditions to ensure convergence, whereas EIGNN is also an infinite-depth GNN model but avoids such conditions. Without restrictions on the weight matrix, even as a linear model, EIGNN achieves better performance than IGNN as shown in the empirical experiments.
Previous work also shows that a linear GNN can achieve comparable performance compared with GCNs \citep{SGC}. 
APPNP and PPNP \citep{APPNP} separate predictions from the propagation scheme where PPNP can be treated as considering infinitely many aggregation layers as well via the personalized pagerank matrix. 
However, the propagation of APPNP and PPNP is not learnable. 
This is completely different from IGNN and EIGNN, and limits the model's representation power. 
Besides that, PPNP requires performing a costly matrix inverse operation for a dense $n \times n$ matrix during training.
\paragraph{Time complexity analysis}
Besides the aforementioned advantages of EIGNN over IGNN, using eigendecomposition also has an efficiency advantage over using iterative solvers (like in IGNN). 
If we were to use iterative methods in EIGNN, we could iterate Equation \eqref{eq: iterating Z} until convergence. 
Each computation of gradients has a theoretical computational complexity of $O(K(m^2n+mn^2))$ where $K$ is the number of iterations of an iterative method. 
With eigendecomposition, the time complexity of computing the limit of the infinite sequence is $O(m^3 + m^2n)$. During training, the eigendecomposition of $g(F)$ requires the cost of $O(m^3)$. Thus, the complexity is $O(m^3+m^2n)$ in total.
In practice, for large real-world graph settings, the number of nodes $n$ is generally larger than the number of feature dimensions $m$, which makes $m^3 < mn^2$. 
Therefore, using eigendecomposition for training is more efficient than using iterative methods for infinite-depth GNNs. 

In the above analysis, we omit the time complexity of the eigendecomposition of $S$ since it is a one-time preprocessing operation for each graph. Specifically, after obtaining the result of eigendecomposition of $S$, it can always be reused for training and inference. The time complexity of a plain full eigendecomposition of $S$ is $O(n^3)$. In some cases where the complexity of the preprocessing is of importance, we can consider using truncated eigendecomposition or graph coarsening \citep{huang2021coarseninggcn} to mitigate the cost. Regarding this, we provide more discussions in Appendix \ref{appendix: limitations}.
\section{Experiments}
\label{sec: experiments}
In this section, we demonstrate that EIGNN can effectively learn representations which have the ability to capture long-range dependencies in graphs. 
Therefore, EIGNN achieves state-of-the-art performance for node classification task on both synthetic and real-world datasets.
Specifically, we conduct experiments\footnote{The implementation can be found at https://github.com/liu-jc/EIGNN} to compare EIGNN with representative baselines on seven graph datasets (Chains, Chameleon, Squirrel, Cornell, Texas, Wisconsin, and PPI), where Chains is a synthetic dataset used in \citet{IGNN}. Chameleon, Squirrel, Cornell, Texas, and Wisconsin are real-world datasets with a single graph each \citep{Pei2020Geom-GCN} while PPI is a real-world dataset with multiple graphs \citep{Graphsage}.
Detailed descriptions of datasets and settings about experiments can be found in Appendix \ref{appendix: details of experiments}. 

\subsection{Evaluation on synthetic graphs}
\paragraph{Synthetic experiments \& setup}
In order to directly test existing GNN models' abilities for capturing long-range dependencies of graphs, like in \citet{IGNN}, we use the Chains dataset, in which the model is supposed to classify nodes on the graph constructed by several chains. 
The label information is only encoded in the starting end node of each chain. Suppose we have $c$ classes, $n_c$ chains for each class, and $l$ nodes in each chain, then the graph has $c \times n_c \times l$ nodes in total. 
For training/validation/testing split, we consider 5\%/10\%/85\% which is similar with the semi-supervised node classification setting \citep{semi_GCN}. 
We select several representative baselines to compare (i.e., IGNN \citep{IGNN}, GCN \citep{semi_GCN}, SGC \citep{SGC}, GAT \citep{GAT}, JKNet \citep{JKNet}, APPNP \citep{APPNP}, GCNII \citep{GCNII}, and H2GCN \citep{H2GCN}). 
The hyper-parameter setting and details about baselines' implementation can be found in Appendix \ref{appendix: experimental-setting}. 

\paragraph{Results and analysis}
We first consider binary classification ($c=2$) with 20 chains of each class as in \citet{IGNN}. 
We conduct experiments on graphs with chains of different lengths (10 to 200 with the interval 10). The averaged accuracies with respect to the length of chains are illustrated in Figure \ref{fig: binary-chains}. 
In general, EIGNN and IGNN always outperform the other baselines.
GCN, SGC and GAT provide similarly poor performances, which indicates that these three classic GNN models cannot effectively capture long-range dependencies. 
H2GCN has an even worse performance than GAT, SGC, GCN, which is caused by its ego- and neighbor-embedding separation focusing more on ego-embedding rather than distant information. 
APPNP, JKNet, GCNII are either designed to consider different range of neighbors or to mitigate oversmoothing \citep{Deeper-insight-oversmooth, JKNet}. 
Despite that they outperform GCN, SGC and GAT, they still perform worse than implicit models with infinitely deep layers (i.e., EIGNN and IGNN) since finite layers for aggregations cannot effectively capture underlying dependencies along extremely long chains (e.g., with length > 30). 
Intuitively, increasing the number of layers should improve the model's ability to capture long-range dependencies. However, stacking layers requires excessive computational cost and causes oversmoothing which degrades the performance. 
Appendix \ref{appendix: oversmoothness} shows the results of APPNP, JKNet and GCNII with a increased number of layers, which illustrates that simply stacking layers cannot effectively help to capture long-range dependencies. 
Comparing EIGNN and IGNN, they both perfectly capture the long-range dependency when the length is less than 30, while the performance of IGNN generally decreases when the length increases. 
\begin{figure}
	\begin{subfigure}[b]{0.5\textwidth}
		\centering
		\includegraphics[width=1\textwidth]{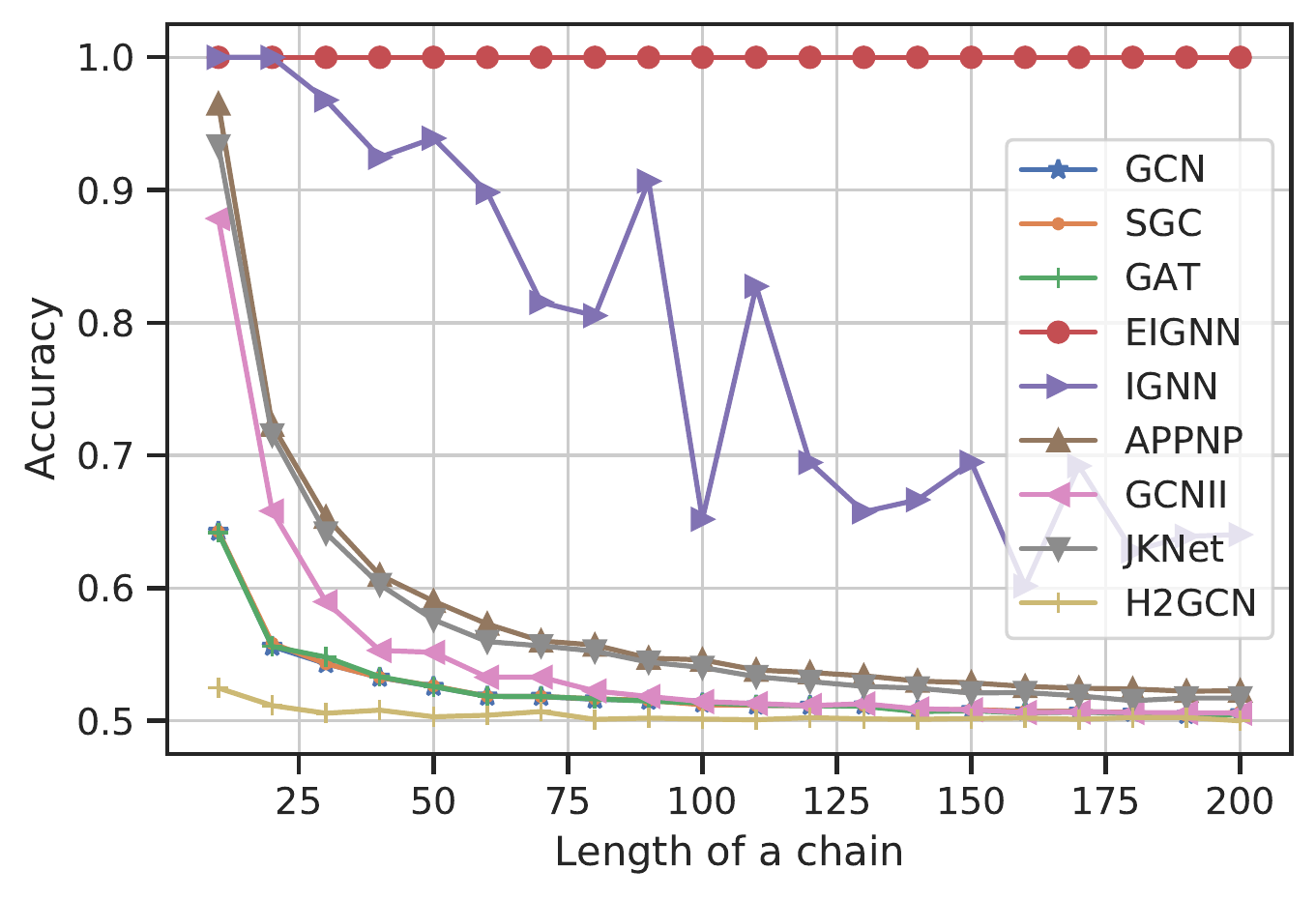}
		\caption{Results of binary classification}
		\label{fig: binary-chains}
	\end{subfigure}
	\begin{subfigure}[b]{0.5\textwidth}
		\centering
		\includegraphics[width=1\textwidth]{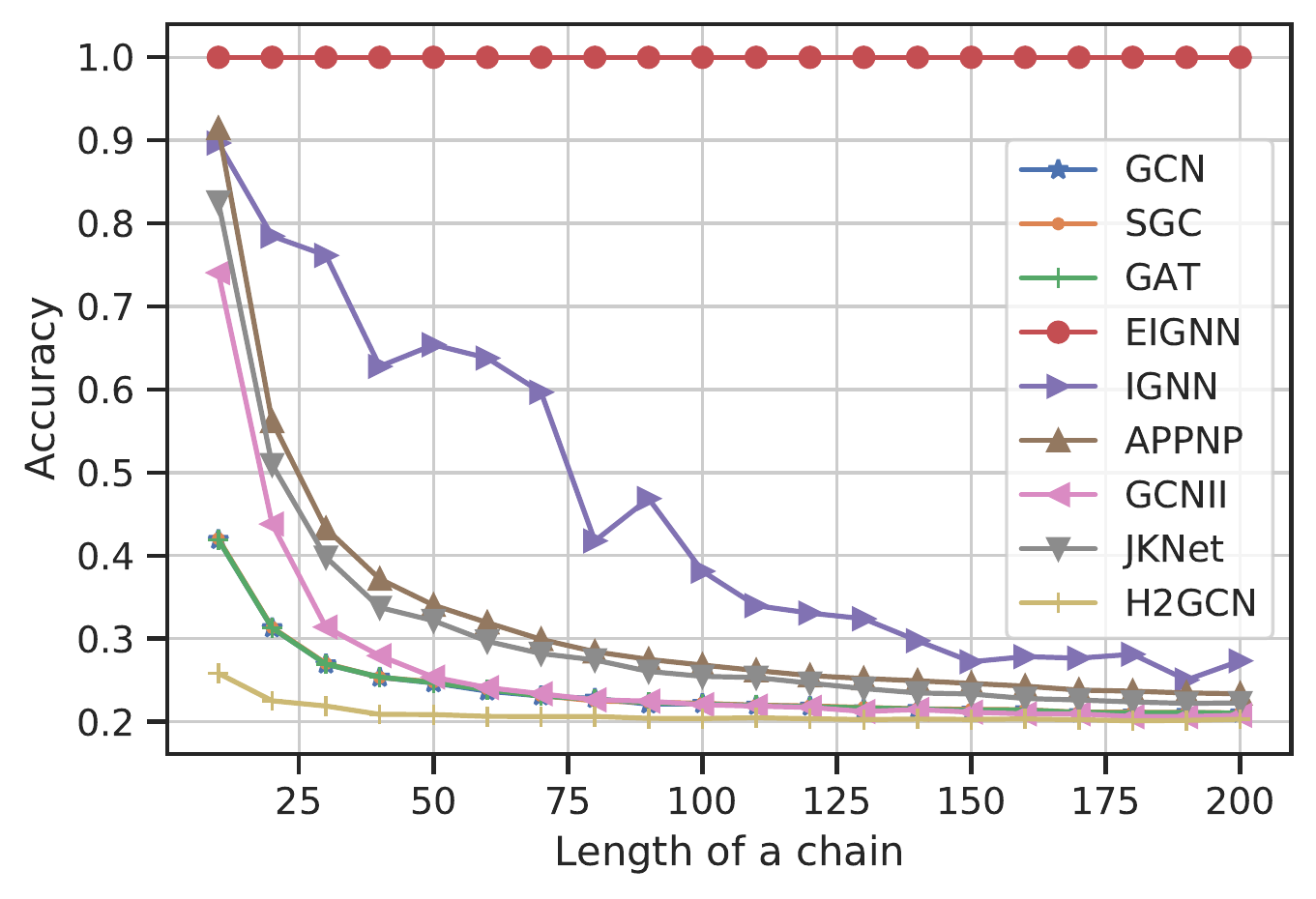}
		\caption{Results of multiclass classification}
		\label{fig: multiclass-chains}
	\end{subfigure}
	\caption{Averaged accuracies with respect to the length of chains.}
	\label{fig: overall-chains-results}
\end{figure}
In addition to binary classification used in \citet{IGNN}, we also conduct experiments on multi-class classification ($c=5$) to evaluate the ability of capturing long-range dependencies on a generalized setting. 
Figure \ref{fig: multiclass-chains} shows that EIGNN still easily maintains 100\% test accuracy on multi-class classification. In contrast, all other baselines experience performance drops compared to binary classification. 
On multi-class classification, EIGNN and IGNN consistently outperform the other baselines, verifying that models with infinitely deep layers have advantages for capturing long-range dependencies.

\begin{wrapfigure}{r}{0.35\textwidth}
	\begin{center}
		\includegraphics[width=0.35\textwidth]{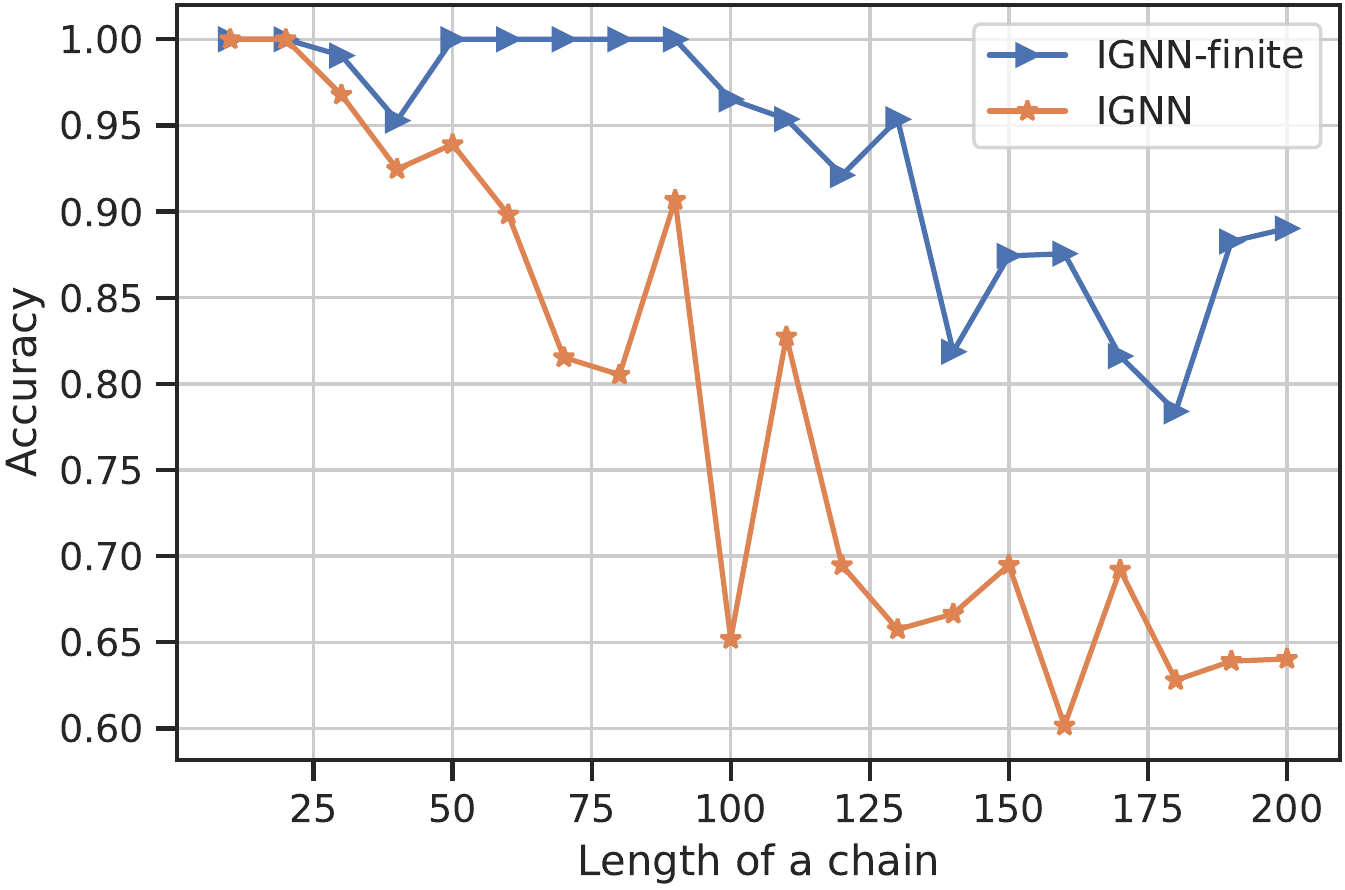}
	\end{center}
	\caption{Mean accuracies comparison between the finite-depth and infinite-depth IGNN.}
	\label{fig: finite_vs_inf}
	\vspace{-5mm}
\end{wrapfigure}
On both classification and multiclass classification settings, as shown in Figure \ref{fig: overall-chains-results}, IGNN cannot always achieve 100\% test accuracy, especially when the length is longer than 30. 
However, IGNN is supposed to capture dependencies on infinite-length chains as it is an implicit model with infinite depth. 
We conjecture that the reason for the inferior performance is that the iterative solver in IGNN cannot easily capture all dependencies on extremely long chains. 

\paragraph{Finite-depth IGNN v.s. Infinite-depth IGNN} 
To verify the hypothesis mentioned above, we conduct an additional experiment considering two kinds of IGNN, i.e., finite-depth IGNN and infinite-depth IGNN (the version proposed in \citet{IGNN}). 
On a dataset with chain length $l$, for finite-depth IGNN, we use $l$ hidden layers which makes it able to capture dependencies with up to $l$-hops. 
As shown in Figure \ref{fig: finite_vs_inf}, IGNN-finite always outperforms IGNN when the length of chains becomes longer. 
This confirms that the iterative solver used in infinite-depth IGNN is unable to capture all dependencies in longer chains.
Note that we use the official implementation\footnote{https://github.com/SwiftieH/IGNN} of IGNN and follow the exact same setting (e.g., hyperparameters) as in \citet{IGNN}.
This can be caused by the approximated solutions generated by iterative solvers and the sensitivity to hyper-parameters of iterative solvers (e.g., convergence criteria). 
To further support this hypothesis, we illustrate train/test loss trajectories of IGNN-finite and IGNN in Appendix \ref{appendix: loss-trajecories}. 

\paragraph{Efficiency comparison}
\begin{wraptable}{r}{0.45\textwidth}
	\caption{Training time per epoch with respect to different $l$ and $n_c$}
	\label{tab: efficiency}
	\small
	\begin{tabular}{@{}lccc@{}}
		\toprule
		($l$, $n_c$) & EIGNN & IGNN    & IGNN-finite \\ \midrule
		(100, 20) & 4.05ms  & 27.36ms & 74.61ms            \\
		(200, 20)  & 8.03ms  & 29.88ms & 145.78ms             \\
		(100, 30) & 6.03ms  & 64.94ms & 76.31ms            \\
		(200, 30) & 16.0ms & 60.31ms & 146.60ms            \\
		\bottomrule
	\end{tabular}
	\vspace{-1mm}
\end{wraptable}
Besides the performance comparison, on Table \ref{tab: efficiency}, we also report the training time per epoch of different models (i.e., EIGNN, IGNN, and IGNN-finite) on datasets with varying length $l$ and the number of chains $n_c$.
On datasets with different chain length $l$, we set the number of hidden layers to $l$ for IGNN-finite as mentioned in the last paragraph. 
EIGNN requires less training time than IGNN and IGNN-finite. 
IGNN-finite with many hidden layers (i.e., >100) spends much more training time than implicit models (i.e., IGNN and EIGNN), which shows the efficiency advantage of implicit models when capturing long-range dependencies is desired. 
Note that EIGNN requires precomputation for eigendecomposition on adjacency matrix. 
After that, the results can be saved to the disk for further use.
Thus, this operation is only conducted once for each dataset, which is tractable and generally requires less than 30 seconds in the experiment.


\subsection{Evaluation on real-world datasets}
\paragraph{Real-world datasets \& setup}
Besides the experiments on synthetic graphs, following \citet{Pei2020Geom-GCN}, we also conduct experiments on several real-world heterophilic graphs to evaluate the model's ability to capture long-range dependencies.    
As nodes with the same class are far away from each other in heterophilic graphs, this requires models to aggregate information from distant nodes. 
Cornell, Texas, and Wisconsin are web-page graphs of the corresponding universities while Chameleon and Squirrel are web-page graphs of Wikipedia of the corresponding topic. 

In addition, to show that EIGNN is applicable to multi-label multi-graph inductive learning, we also evaluate EIGNN on a commonly used dataset Protein-Protein Interation (PPI). PPI contains multiple graphs where nodes are proteins and edges are interactions between proteins. We follow the train/validation/test split used in GraphSAGE \citep{Graphsage}. 
\begin{table}[]
	\caption{Results on real-world datasets: mean accuracy (\%) $\pm$ stdev over different data splits. ``*" denotes that the results are obtained from \citep{Pei2020Geom-GCN}.}
	\label{tab: real-world results}
	\centering
	\begin{tabular}{@{}lccccc@{}}
		\toprule
		& \textbf{Cornell} & \textbf{Texas} & \textbf{Wisconsin} & \textbf{Chameleon} & \textbf{Squirrel} \\
		\textbf{\#Nodes}   & 183     & 183   & 251       & 2,277     & 5,201    \\
		\textbf{\#Edges}   & 280     & 295   & 466       & 31,421    & 198,493  \\
		\textbf{\#Classes} & 5       & 5     & 5         & 5         & 5        \\ \midrule
		EIGNN  & \textbf{85.13\small{$\pm$5.57}}   & 84.60\small{$\pm$5.41} & \textbf{86.86\small{$\pm$5.54}}     & \textbf{62.92\small{$\pm$1.59}}     & \textbf{46.37\small{$\pm$1.39}}    \\
		IGNN \citep{IGNN}    & 61.35\small{$\pm$4.84}   & 58.37\small{$\pm$5.82} & 53.53\small{$\pm$6.49}     & 41.38\small{$\pm$2.53}     & 24.99\small{$\pm$2.11}    \\
		Geom-GCN* \citep{Pei2020Geom-GCN} & 60.81   & 67.57 & 64.12     & 60.90     & 38.14    \\
		SGC \citep{SGC}     & 58.91\small{$\pm$3.15}   & 58.92\small{$\pm$4.32} & 59.41\small{$\pm$6.39}     & 40.63\small{$\pm$2.35}     & 28.4\small{$\pm$1.43}     \\
		GCN \citep{semi_GCN}     & 59.19\small{$\pm$3.51}   & 64.05\small{$\pm$5.28} & 61.17\small{$\pm$4.71}     & 42.34\small{$\pm$2.77}     & 29.0\small{$\pm$1.10}     \\
		GAT \citep{GAT}     & 59.46\small{$\pm$6.94}   & 61.62\small{$\pm$5.77} & 60.78\small{$\pm$8.27}     & 46.03\small{$\pm$2.51}     & 30.51\small{$\pm$1.28}    \\
		APPNP \citep{APPNP}   & 63.78\small{$\pm$5.43}   & 64.32\small{$\pm$7.03} & 61.57\small{$\pm$3.31}     & 43.85\small{$\pm$2.43}     & 30.67\small{$\pm$1.06}    \\
		JKNet \citep{JKNet}   & 58.18\small{$\pm$3.87}   & 63.78\small{$\pm$6.30}  & 60.98\small{$\pm$2.97}     & 44.45\small{$\pm$3.17}     & 30.83\small{$\pm$1.65}    \\
		GCNII \citep{GCNII}   & 76.75\small{$\pm$5.95}   & 73.51\small{$\pm$9.95} & 78.82\small{$\pm$5.74}     & 48.59\small{$\pm$1.88}     & 32.20\small{$\pm$1.06}    \\
		H2GCN \citep{H2GCN}   & 82.22\small{$\pm$5.67}   & \textbf{84.76\small{$\pm$5.57}}      & 85.88\small{$\pm$4.58}     & 60.30\small{$\pm$2.31}     & 40.75\small{$\pm$1.44}    \\ \bottomrule
	\end{tabular}
\end{table}

\paragraph{Results and analysis}
As shown in Table \ref{tab: real-world results}, in general, EIGNN outperforms other baselines on most datasets. 
On Texas, EIGNN provides a similar mean accuracy compared with H2GCN which generally provides the second best performance on other real-world datasets. 
H2GCN focuses more on different designs about aggregations \citep{H2GCN}. 
In contrast, EIGNN performs well through a different approach (i.e., capturing long-range dependencies). 
In terms of capturing long-range dependencies, H2GCN lacks this ability as demonstrated in our synthetic experiments (see Figure \ref{fig: overall-chains-results}). 
Geom-GCN outperforms several baselines (i.e., SGC, GCN, GAT, APPNP and JKNet), which is attributed to its ability to capture long-range dependencies.
However, GCNII provides better mean accuracies than Geom-GCN. 
GCNII with many aggregation layers stacked is designed for mitigating the oversmoothing issue. 
Thus, this result suggests that alleviating the oversmoothing issue can also improve the model's ability to capture long-range dependencies. 
Note that Geom-GCN and IGNN are both significantly outperformed by EIGNN. Therefore, it indicates that they are still not effective enough in capturing long-range dependencies on heterophilic graphs. 
\begin{wraptable}{r}{0.35\textwidth}
    \caption{Multi-label node classification on PPI: Micro-F1 (\%).}
    \label{tab: PPI results}
    \begin{tabular}{@{}ll@{}}
    \toprule
    Method                 & Micro-F1      \\ \midrule
    Multi-Layer Perceptron & 46.2          \\
    GCN \citep{semi_GCN}                   & 59.2          \\
    GraphSAGE \citep{Graphsage}             & 78.6          \\
    SSE \citep{pmlr-v80-dai18a}                   & 83.6          \\
    GAT \citep{GAT}                   & 97.3          \\
    IGNN \citep{IGNN}                   & 97.6          \\ \midrule
    EIGNN                  & \textbf{98.0} \\ \bottomrule
    \end{tabular}
\end{wraptable}

Table \ref{tab: PPI results} reports the micro-averaged F1 scores of EIGNN and other popular baseline methods on PPI dataset. As we follow the exactly same setting used in \citet{IGNN}, the results of baselines are obtained from \citet{IGNN}. From Table \ref{tab: PPI results}, EIGNN achieves better performance compared with other baselines, which can be attributed to the ability of EIGNN to capture long-range dependencies between proteins on PPI dataset. Additionally, we also conduct the efficiency comparison between EIGNN and IGNN. On average, for training an epoch, EIGNN requires 2.23 seconds, whereas IGNN spends 35.03 seconds. As suggested in IGNN \citep{IGNN}, the training generally requires more than 1000 epochs. Besides the training time, EIGNN needs a one-time preprocessing for eigendecomposition on $S$ which costs 40.59 seconds. Therefore, considering both preprocessing and training, EIGNN still requires less time compared with IGNN.

\subsection{Noise sensitivity}
Recent research \citep{GIB,nettack} shows that GNNs are vulnerable to perturbations, and \citet{el2019implicit} analyze the potential robustness properties of an implicit model. Thus, in this subsection, we empirically examine the potential benefit of EIGNN on robustness against noise. 
For simplicity, we add perturbations only to node features.
Experiments on synthetic datasets and real-world datasets are both conducted. 
\paragraph{Synthetic experiments} 
\begin{wrapfigure}{r}{0.35\textwidth}
	\begin{center}
		\includegraphics[width=0.35\textwidth]{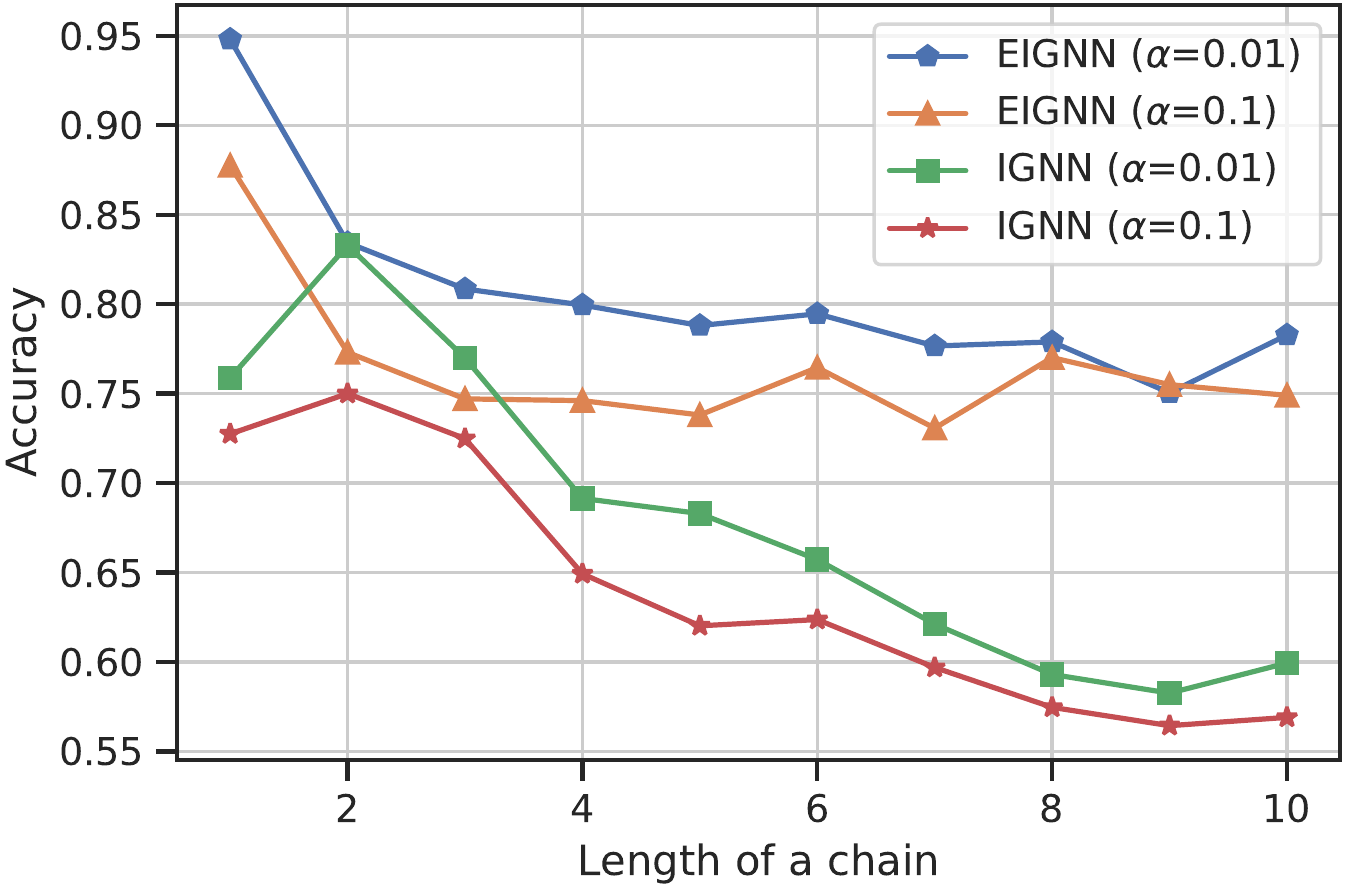}
	\end{center}
	\caption{Accuracy on datasets with different feature noise.}
	\label{fig: noisy_feat_synthetic}
	\vspace{-5mm}
\end{wrapfigure}
On synthetic chain datasets, we add random perturbations to node features as in \citet{GIB}. 
Specifically, we add uniform noise $\epsilon \sim \mathcal{U}(-\alpha, \alpha)$ to each node's features for constructing noisy datasets.
Figure \ref{fig: noisy_feat_synthetic} shows that across different noise levels,  EIGNN consistently outperforms IGNN. 
This indicates that EIGNN is less vulnerable to noisy features.

\paragraph{Real-world experiments} 
On real-world datasets, we use two classic adversarial attacks (i.e., FGSM \citep{FGSM} and PGD \citep{PGD}) to add perturbations on node features. See the detailed setting in Appendix \ref{appendix: experimental-setting}. 
We compare three models (i.e., EIGNN, H2GCN, IGNN) on Cornell, Texas, and  Wisconsin under different level perturbations. 
As we can observe from Table \ref{tab: real-world perturbations}, EIGNN consistently achieves the best performance under different perturbation rates on all three datasets. 
Specifically, EIGNN outperforms H2GCN by a larger margin when the perturbation rate is higher. 
Under FGSM with a perturbation rate larger than 0.001, H2GCN is even outperformed by IGNN which has worse performance under 0.0001 perturbation rate.  

In summary, the above results show that EIGNN is generally more robust against perturbations compared with H2GCN. The intuitive reason is that EIGNN is not/less sensitive to local neighborhoods, while finite-depth GNNs (e.g., H2GCN) could be very sensitive to local neighborhoods. Finite-depth GNNs usually only have a few layers and the node representations would be changed a lot when aggregating from perturbed node features. In contrast, EIGNN, as an infinite-depth GNN model, can consider more about global information and become less sensitive to perturbations from local neighborhoods. 

\begin{table}[]
	\centering
	\small
	\caption{Accuracy (\%) under attacks with different perturbations.}
	\label{tab: real-world perturbations}
	\begin{tabular}{@{}cl|ccc|ccc@{}}
		\toprule
		\multicolumn{1}{l}{}       & Attack             & \multicolumn{3}{c|}{FGSM} & \multicolumn{3}{c}{PGD} \\
		\multicolumn{1}{l}{}       & Perturbation      & 0.0001  & 0.001  & 0.01   & 0.0001  & 0.001  & 0.01  \\ \midrule
		\multirow{3}{*}{Corn.}   & EIGNN      & \textbf{85.13\footnotesize{$\pm$5.57}}   & \textbf{84.05\footnotesize{$\pm$5.59}}  & \textbf{72.70\footnotesize{$\pm$6.56}}  & \textbf{85.13\footnotesize{$\pm$5.57}}   & \textbf{84.05\small{$\pm$5.59}}  & \textbf{73.24\small{$\pm$6.45}} \\
		& H2GCN        & 79.46\small{$\pm$5.16}   & 26.49\small{$\pm$8.62}  & 2.16\small{$\pm$2.36}   & 82.97\small{$\pm$5.27}   & 65.13\small{$\pm$9.00}  & 25.68\small{$\pm$8.65} \\
		& IGNN         & 61.08\small{$\pm$3.66}   & 60.54\small{$\pm$4.22}  & 57.30\small{$\pm$5.51}  & 61.07\small{$\pm$3.66}   & 60.54\small{$\pm$4.22}  & 57.84\small{$\pm$5.16} \\ \midrule
		\multirow{3}{*}{Texas}     & EIGNN      & \textbf{84.33\small{$\pm$5.77}}   & \textbf{83.79\small{$\pm$5.26}}  & \textbf{74.59\small{$\pm$4.05}}  & 84.33\small{$\pm$5.77}   & \textbf{83.79\small{$\pm$5.27}}  & \textbf{75.14\small{$\pm$4.32}} \\
		& H2GCN        & 81.35\small{$\pm$6.22}   & 39.79\small{$\pm$9.05}  & 7.30\small{$\pm$7.74}   & \textbf{84.86\small{$\pm$5.57}}   & 70.54\small{$\pm$6.89}  & 36.22\small{$\pm$7.57} \\
		& IGNN         & 57.84\small{$\pm$5.16}   & 57.28\small{$\pm$5.64}  & 53.51\small{$\pm$5.10}  & 57.84\small{$\pm$5.16}   & 57.30\small{$\pm$5.64}  & 53.51\small{$\pm$5.10} \\ \midrule
		\multirow{3}{*}{Wisc.} & EIGNN       & \textbf{86.67\small{$\pm$6.19}}   & \textbf{84.90\small{$\pm$6.45}}  & \textbf{70.00\small{$\pm$6.73}}  & \textbf{86.67\small{$\pm$6.19}}   & \textbf{85.10\small{$\pm$6.34}}  & \textbf{71.96\small{$\pm$6.76}} \\
		& H2GCN        & 83.53\small{$\pm$4.04}   & 47.84\small{$\pm$6.51}  & 4.12\small{$\pm$3.66}   & 84.90\small{$\pm$3.62}   & 77.25\small{$\pm$4.13}  & 38.04\small{$\pm$6.52} \\
		& IGNN         & 53.53\small{$\pm$4.39}   & 53.14\small{$\pm$4.06}  & 47.26\small{$\pm$3.77}  & 53.53\small{$\pm$4.39}   & 53.14\small{$\pm$4.06}  & 47.84\small{$\pm$3.53} \\ \bottomrule
	\end{tabular}
\end{table}

\section{Conclusion}
In this paper, we address an important limitation of existing GNNs: their lack of ability to capture long-range dependencies. We propose EIGNN, a GNN model with implicit infinite layers.
Instead of using iterative solvers like in previous work, we derive a closed-form solution of EIGNN with rigorous proofs, which makes training tractable. 
We further achieve more efficient computation for training our model by using eigendecomposition.
On synthetic experiments, we demonstrate that EIGNN has a better ability to capture long-range dependencies. In addition, it also provides state-of-the-art performance on real-world datasets. 
Moreover, our model is less susceptible to perturbations on node features compared with other models.
One of the potential limitations of our work is that EIGNN can be slow when the number of feature dimensions is very large (e.g., 1 million), which is rare in practical GNN settings. 
Future work could propose a new model which allows for efficient training even with large feature dimension. 

\ack 
We would like to thank Keke Huang and Tianyuan Jin for helpful discussions. We also gratefully acknowledge the insightful feedback and suggestions from the anonymous reviewers. 
This paper is supported by the Ministry of Education, Singapore (Grant No: MOE2018-T2-2-091).
The views and conclusions contained in this paper are those of the authors and should not be interpreted as representing any funding agencies. 

\bibliographystyle{plainnat}
\bibliography{reference}
\clearpage
\clearpage
\appendix
\section{Kronecker Product and Vectorization}
Given two matrices $A \in \RR^{m\times n}$ and $B \in \RR^{p\times q}$, the Kronecker product $A \otimes B \in \RR^{pm \times qn}$ is defined as follows: 
\begin{equation*}
	A \otimes B=\left(\begin{array}{ccc}
		A_{11} B & \cdots & A_{1 n} B \\
		\vdots & \ddots & \vdots \\
		A_{m 1} B & \cdots & A_{m n} B
	\end{array}\right). 
\end{equation*}
One of the most important properties of the vectorization with the kronecker product is $\vect[AB] = (I_m \otimes A) \vect[B]$ where the vectorization of $B$ (i.e., $\vect[B] \in \RR^{mn}$) can be obtained by stacking the columns of the matrix $B \in \RR^{m \times n}$ into a single column vector. 
We summarize a few other properties of kronecker product and vectorization used in our proofs as follows: 
\begin{itemize}
    \item $\|A \otimes B\|_2 = \|A\|_2\|B\|_2$
    \item $[A \otimes B]\T = A\T \otimes B\T$
    \item $\vect[ABC] = =(C\T \otimes A) \vect[B] = (I_n \otimes AB)\vect[C] = (C\T B\T \otimes I_k) \vect[A]$
    \item $[AB] \otimes [CD] = [A \otimes C][B \otimes D]$ 
\end{itemize}

\section{Proofs}
\label{appendix: all-proofs}
\subsection{Proof of Proposition \ref{proposition: lim z}}
\label{proof: lim z}
\begin{proof}
	Since $S$ is symmetric,
	\begin{align*}
		\vect[Z^{(l+1)}] =\vect[X]+ \gamma \vect[g(F) Z^{(l)}S]=\gamma [S \otimes g(F) ] \vect[ Z^{(l)}]+\vect[X].
	\end{align*}
	By repeatedly apply this to $Z^{(l)}$,
	$$
	\vect[Z^{(H)}] =  \sum_{k=0}^H \gamma^k [S \otimes g(F)]^k \vect[X]=  \sum_{k=0}^H \gamma^k [S^{k} \otimes g(F)^{k}] \vect[X]. 
	$$
	Let $s_H=  \sum_{k=0}^H \gamma^k [S^{k} \otimes g(F)^{k}]$. Then for  $H>l$, 
	by using the triangle inequality of a norm and the submultiplicativity of an induced matrix norm, 
	\begin{align*}
		\|s_H - s_{l}\|_2 = \left\|\sum_{k=l+1}^H \gamma^k [S^{k} \otimes g(F)^{k}] \right\|_2 & \le \sum_{k=l+1}^H\left\| \gamma^k [S^{k} \otimes g(F)^{k}] \right\|_2 
		\\ &\le\sum_{k=l+1}^H \prod_{i=1}^k \left\| \gamma[S\otimes g(F)] \right\|_2
		\\ & =\sum_{k=l+1}^H \prod_{i=1}^k \gamma \|S\|_2 \|g(F)\|_{\Fmathrm} 
	\end{align*}
	where the last line follows 
	from the property of the Kronecker product.  Since $S$ is symmetric and normalized,
	$$
	\|S\|_2 = |\lambda_{\max}(S)| \le 1.
	$$
	Since $\epsilon_F>0$, 
	$$
	\|g(F)\|_2 \le \|g(F)\|_{\Fmathrm} = \left\|\frac{1}{\|F\T F\|_{\Fmathrm} + \epsilon_F} F\T F \right\|_{\Fmathrm}=\frac{\|F\T F\|_{\Fmathrm}}{\|F\T F\|_{\Fmathrm} + \epsilon_F}  < 1. 
	$$
	In other words, there exists $\delta\in[0,1)$ such that $\|g(F)\|_2 \le \delta$. 
	Combining,
	$$
	\|s_H - s_{l}\|_2 \le\sum_{k=l+1}^H \prod_{i=1}^k \gamma \|S\|_2 \|g(F)\|_{\Fmathrm}  \le\sum_{k=l+1}^H \prod_{i=1}^k \gamma \delta=\sum_{k=l+1}^H (\gamma \delta)^k.
	$$
	Here, we have $\gamma \delta \in [0,1)$ since $\gamma \in (0,1]$ and $\delta\in[0,1)$. Thus, we have 
	$
	\sum_{k=0}^H(\gamma \delta)^k= \frac{1-(\gamma \delta)^{H+1}}{1-\gamma \delta}
	$. Therefore, 
	$$
	\sum_{k=l+1}^H (\gamma \delta)^k=
	\sum_{k=0}^H(\gamma \delta)^k -\sum_{k=0}^l(\gamma \delta)^k=\frac{(\gamma \delta)^{l+1}-(\gamma \delta)^{H+1}}{1-\gamma \delta} \le \frac{(\gamma \delta)^{l+1}}{1-\gamma \delta}. 
	$$
	Combining
	$$
	\|s_H - s_{l}\|_2 \le\sum_{k=l+1}^H (\gamma \delta)^k \le \frac{(\gamma \delta)^{l+1}}{1-\gamma \delta}. 
	$$
	Since $\|s_H - s_{l}\|_2 \le \frac{(\gamma \delta)^{l+1}}{1-\gamma \delta}$ with $\gamma \delta <1$, we have that for any $\epsilon>0$, there exists an integer $\bar l$ such that $\|s_H - s_{l}\|_2 < \epsilon$  for any $H,l \ge \bar l$.
	Thus, the sequence $(s_l)_{l} =  (\sum_{k=0}^l \gamma^k [S^{k} \otimes g(F)^{k}])_l$ is a Cauchy sequence and converges. 
	
	Since it converges, by recalling the fact that for an matrix $M$, 
	$$
	(1-M) \left(\sum_{k=0}^H M^k \right) = I-M^{H+1},
	$$
	we have that
	\begin{align*}
		\left(I-\gamma [S \otimes g(F)]  \right) \left( \lim_{H \rightarrow \infty}\sum_{k=0}^H \gamma^k [S^{k} \otimes g(F)^{k}]\right) &= \lim_{H \rightarrow \infty } \left(I-\gamma [S \otimes g(F)]  \right) \left( \sum_{k=0}^H \gamma^k [S^{k} \otimes g(F)^{k}]\right)
		\\ & = \lim_{H \rightarrow \infty } (I-\gamma^{H+1} [S^{H+1} \otimes g(F)^{H+1}])
		\\ & = I
	\end{align*}
	where the last line follows from the fact that 
	$$
	\|\gamma^{H+1} [S^{H+1} \otimes g(F)^{H+1}]\| _{2}\le\prod_{i=1}^H \left\| \gamma[S\otimes g(F)] \right\|_2 =\prod_{i=1}^{H}\gamma \|S\|_2 \|g(F)\|_{\Fmathrm} \le(\gamma \delta)^H  \rightarrow 0 \ \ \ \text{ as } H \rightarrow \infty
	$$
	where the inequalities follows from the above equations used to bound $\|s_H - s_{l}\|_2$. 
	
	This implies that
	$$
	\left( \lim_{H \rightarrow \infty}\sum_{k=0}^H \gamma^k [S^{k} \otimes g(F)^{k}]\right) =\left(I-\gamma [S \otimes g(F)]  \right)^{-1}, 
	$$
	if the inverse $\left(I-\gamma [S \otimes g(F)]  \right)^{-1}$ exists. This inverse  exists, which can be seen by the following facts. From  the above equations used to bound $\|s_H - s_{l}\|_2$ (and the fact that $S$ and $g(F)$ are symmetric and hence their singular values are the absolute values of their eigenvalues), we  have  $\lambda_{\max}(\gamma[S \otimes g(F)])=\gamma \lambda_{\max}(S)\lambda_{\max}( g(F))<1$. Also, since $(I-\gamma [S \otimes g(F)])$ is real symmetric (as $[S \otimes g(F)]\T =[S \T\otimes g(F)\T] = [S\otimes g(F)]$), all the eigenvalues of the matrix $(I-\gamma [S \otimes g(F)])$ are real numbers. Combining these facts, all the eigenvalues of $(I-\gamma [S \otimes g(F)])$ are strictly positive (since the eigenvalues of the identity matrix $I$ are ones). Therefore, $(I-\gamma [S \otimes g(F)])$ is positive definite and invertible. 
	
	Using these results, we have 
	$$
	\lim_{H \rightarrow \infty}\vect[Z^{(H)}] =\left(I-\gamma [S \otimes g(F)]  \right)^{-1}  \vect[X].
	$$
	$$
	\vect[f(X,F,B)]=\vect\left[B \left(\lim_{H \rightarrow \infty } Z^{(H)}\right) \right]=[I_{n} \otimes B ] \vect\left[ \left(\lim_{H \rightarrow \infty } Z^{(H)}\right) \right] =[I_{n} \otimes B ]\left(I-\gamma [S \otimes g(F)]  \right)^{-1}  \vect[X]. 
	$$
\end{proof}

\subsection{Proof of Proposition \ref{proposition: the gradients of the sequence}}
\label{proof: the gradients of the sequence}
\begin{proof}
	By the definition of $Z^{(H)}=Z^{(H)}_{X,F}$,
	$$
	\lim_{H \rightarrow \infty}\vect[Z^{(H)}]=\lim_{H \rightarrow \infty} \vect[\gamma g(F) Z^{(H-1)}S+X]=\gamma [S \otimes g(F) ] \left( \lim_{H \rightarrow \infty}\vect[ Z^{(H-1)}] \right)+\vect[X]. 
	$$
	Since we have shown that the limit exists in Proposition \ref{proposition: lim z}, we can define  $Z_{X,F}=\lim_{H \rightarrow \infty}Z^{(H)}$. Then the above equation can be rewritten as
	$$
	\vect[Z_{X,F}] =\gamma [S \otimes g(F) ]\vect[Z_{X,F}]+\vect[X], 
	$$
	which is equivalent to 
	$$
	\vect[Z_{X,F}] -\gamma [S \otimes g(F)] \vect[Z_{X,F}] - \vect[X] = 0. 
	$$
	
	Define a function 
	$$
	\varphi(\vect[F], \vect[Z])=\vect[Z] -\gamma [S \otimes g(F) ]\vect[Z]-\vect[X], 
	$$
	where $F$ and $Z$ are  independent variables of this function $\varphi$. For general $F$ and $Z$, $\varphi(\vect[F],\vect[Z])$ is allowed to be nonzero. In contrast, we have $\varphi(\vect[F],\vect[Z_{X,F}])=0$ where $F$ and $Z_{X,F}$ are dependent variables because of the definition of $Z_{X,F}=\lim_{H \rightarrow \infty}Z^{(H)}_{X,F}$. 
	
	With this definition, we have 
	$$
	J(\bar F, \bar Z)=\left. \frac{\partial \varphi(\vect[F],\vect[Z])}{\partial \vect[Z]} \right\vert_{(F,Z)=(\bar F,\bar Z)} = I - \gamma [S \otimes g(\bar F)].
	$$
	Here, $\varphi(\vect[F],\vect[Z_{X,F}])=0$ and the Jacobian $J(F, Z_{X,F})=I - \gamma [S \otimes g(F)]$ is invertible as shown in Appendix \ref{proof: lim z}. Therefore, applying the implicit function theorem to the function $\varphi$  yields the following:  there exists an open set containing $F$ and a function $\psi$ defined on the open set such that $\psi(F)=\vect[Z_{X,F}]$ , $\varphi( \vect[\bar F],\psi(\bar F))=0$ for all $\bar F$ in the open set, and  
	$
	\frac{\partial\psi(\bar F)}{\partial\vect[\bar F]} =-J(\bar F, \bar Z)^{-1} \frac{\partial \varphi(\vect[\bar F],\vect[\bar Z])}{\partial  \vect[\bar F]}
	$ for all $\bar F$ in the open set. From the above equivalent definition of $Z_{X,F}$ (i.e., $\varphi(\vect[F],\vect[Z_{X,F}])=0$) , 
	this implies that 
	$$
	\frac{\partial \vect[Z_{X,F}]}{\partial\vect[F]} = -J(F, Z_{X,F})^{-1} \left.   \frac{\partial \varphi(\vect[F],\vect[Z])}{\partial  \vect[F]}\right\vert_{Z= Z_{X,F}}
	$$
	Using the definition of $\varphi$
	\begin{align*}
		\frac{\partial \varphi(\vect[F],\vect[Z])}{\partial  \vect[F]} &=\frac{\partial }{\partial  \vect[F]} \vect[Z] -\gamma \vect[g(F)ZS]-\vect[X]
		\\ & =\frac{\partial }{\partial  \vect[F]} \vect[Z] -\gamma [SZ\T \otimes I_{m} ]\vect[g(F)]-\vect[X]
		\\ & =-\gamma [SZ\T \otimes I_{m} ]\frac{\partial \vect[g(F)] }{\partial  \vect[F]} 
	\end{align*}
	Combining,
	$$
	\frac{\partial \vect[Z_{X,F}]}{\partial\vect[F]} =-J(F, Z_{X,F})^{-1} \left.   \frac{\partial \varphi(\vect[F],\vect[Z])}{\partial  \vect[F]}\right\vert_{Z= Z_{X,F}} =\gamma (I - \gamma [S \otimes g(F)])^{-1}  [S Z_{X,F}\T \otimes I_{m} ]\frac{\partial \vect[g(F)] }{\partial  \vect[F]},
	$$ 
	where $Z_{X,F}=\lim_{H \rightarrow \infty}Z^{(H)}_{X,F}$ and 
	Proposition \ref{proposition: lim z} has shown that 
	$$
	\lim_{H \rightarrow \infty}\vect[Z^{(H)}_{X,F}]=\left(I-\gamma [S \otimes g(F)]  \right)^{-1}  \vect[X].
	$$
\end{proof}
\subsection{Derivations of Equation \eqref{eq: vanilla gradients of F} and \eqref{eq: vanilla gradients of B}}
\label{Derivations of the graidents of objective functions}
By the chain rule,
\begin{align*}
	\frac{\partial L(B,F)}{\partial \vect[F]} &= \frac{\partial \ell_Y(\mathbf{f}_{X,F,B}) }{\partial \vect[\mathbf{f}_{X,F,B}]} \frac{\partial \vect[\mathbf{f}_{X,F,B}]}{\partial \vect[F]}.
\end{align*}
Since $\vect[\mathbf{f}_{X,F,B}]=[I_n \otimes B] \vect[Z_{X,F}]$, 
\begin{align*}
	\frac{\partial L(B,F)}{\partial \vect[F]} &= \frac{\partial \ell_Y(\mathbf{f}_{X,F,B}) }{\partial \vect[\mathbf{f}_{X,F,B}]} \frac{\partial \vect[\mathbf{f}_{X,F,B}]}{\partial \vect[F]}=\frac{\partial \ell_Y(\mathbf{f}_{X,F,B}) }{\partial \vect[\mathbf{f}_{X,F,B}]}[I_n \otimes B]  \frac{\partial \vect[Z_{X,F}]}{\partial \vect[F]}.
\end{align*}
Using the formula of $\frac{\partial \vect[Z_{X,F}]}{\partial \vect[F]}$ from Appendix \ref{proof: the gradients of the sequence}, 
$$
\frac{\partial L(B,F)}{\partial \vect[F]} =\gamma \frac{\partial \ell_Y(\mathbf{f}_{X,F,B}) }{\partial \vect[\mathbf{f}_{X,F,B}]}[I_n \otimes B]U^{-1} \left[SZ_{X,F}\T\otimes I_{m} \right]\frac{\partial \vect[g(F)] }{\partial  \vect[F]}.
$$
Similarly, since $\vect[\mathbf{f}_{X,F,B}]=[Z_{X,F}\T\otimes I_{m_y}] \vect[B]$, 
\begin{align*}
	\frac{\partial L(B,F)}{\partial \vect[B]} &= \frac{\partial \ell_Y(\mathbf{f}_{X,F,B}) }{\partial \vect[\mathbf{f}_{X,F,B}]} \frac{\partial \vect[\mathbf{f}_{X,F,B}]}{\partial \vect[B]} =\frac{\partial \ell_Y(\mathbf{f}_{X,F,B}) }{\partial \vect[\mathbf{f}_{X,F,B}]}  [Z_{X,F}\T\otimes I_{m_y}] 
\end{align*}
	
\subsection{Derivations of Equation \eqref{eq: f_final} and \eqref{eq: gradients of F (eigen_v1)}} 
\label{appendix: derivations eigen_v1}

	The matrix $g(F)$ is real symmetric since 
	$$
	g(F)\T = \left( \frac{1}{\|F\T F\|_{\Fmathrm} + \epsilon_F} F\T F \right)\T = \frac{1}{\|F\T F\|_{\Fmathrm} + \epsilon_F} F\T F =g(F).
	$$
	Since $S$ is also real symmetric, we can use eigendecomposition
	of $g(F)$ and $S$ as
	$$
	g(F) = Q_F \Lambda _F Q_F\T
	$$
	$$
	S = Q_S \Lambda_S Q_S\T
	$$
	where $Q_S$ and $Q_F$ are unitary matrices. Thus,
	$$
	[S \otimes g(F)]=[Q_S \Lambda_S Q_S\T\otimes Q_F \Lambda _F  Q_F\T]=[Q_S \otimes Q_F  ][\Lambda_S \otimes\Lambda _F ][Q_S \T \otimes Q_F\T]. 
	$$
	$$
	I_{mn} = [I_n \otimes I_m] =[Q_S Q_S\T\otimes Q_F  Q_F\T]=[Q_S \otimes Q_F  ][I_{n} \otimes I_{m} ][Q_S \T \otimes Q_F\T].  
	$$
	Combining, 
	\begin{align*}
		U &= I_{mn} - \gamma[S \otimes g(F)]
		\\ & =[Q_S \otimes Q_F  ][I_{n} \otimes I_{m} ][Q_S \T \otimes Q_F\T]-\gamma[Q_S \otimes Q_F  ][\Lambda_S \otimes\Lambda _F ][Q_S \T \otimes Q_F\T]
		\\ & =[Q_S \otimes Q_F  ]([I_{n} \otimes I_{m} ]-\gamma[\Lambda_S \otimes\Lambda _F ])[Q_S \T \otimes Q_F\T].    
	\end{align*}
	Thus, 
	\begin{align*}
		U^{-1} &=[Q_S \otimes Q_F  ]([I_{n} \otimes I_{m} ]-\gamma[\Lambda_S \otimes\Lambda _F ])^{-1}[Q_S \T \otimes Q_F\T]
		\\ & =[Q_S \otimes Q_F  ]\diag(\vect[G])[Q_S \T \otimes Q_F\T], 
	\end{align*}
	where the matrix  $G \in \RR^{m \times n}$ is defined by  $G_{ij}=[([I_{n} \otimes I_{m} ]-\gamma[\Lambda_S \otimes\Lambda _F ])^{-1}]_{(i+(j-1)m)(i+(j-1)m)}$ (so that $\vect[G]_i=[([I_{n} \otimes I_{m} ]-\gamma[\Lambda_S \otimes\Lambda _F ])^{-1}]_{ii}$ for $i=1,2,\dots,mn$). By the definition of Kronecker product, this is equivalent to $G_{ij} = 1/(1-\gamma (\bar \Lambda _F \bar \Lambda_S\T)_{ij})$.
	
	Using this form of $U^{-1}$,
	\begin{align*}
		\vect[f(X,F,B)]&=\vect\left[B \left(\lim_{H \rightarrow \infty } Z^{(H)}\right) \right] \\ &=[I_{n} \otimes B ]U^{-1}  \vect[X]
		\\ & =[I_{n} \otimes B ][Q_S \otimes Q_F  ]\diag(\vect[G])[Q_S \T \otimes Q_F\T]\vect[X]
		\\ & =[Q_S \otimes BQ_F  ]\diag(\vect[G])\vect[Q_F\T XQ_S]
		\\ & =[Q_S \otimes BQ_F  ] (\vect[G] \circ \vect[Q_F\T XQ_S])
		\\ & =[Q_S \otimes BQ_F  ]\vect[G \circ (Q_F\T XQ_S)]
		\\ & =\vect[ BQ_F  (G \circ (Q_F\T XQ_S))Q_S\T] \in \RR^{m_y  n} 
	\end{align*}
	Therefore, 
	$$
	\mathbf{f}_{X,F,B}=f(X,F,B) =BQ_F  (G \circ (Q_F\T XQ_S))Q_S\T \in \RR^{m_y \times n}. 
	$$
	Similarly,
	\begin{align*}
		\vect[Z_{X,F}]=\left(\lim_{H \rightarrow \infty } Z^{(H)}\right) =U^{-1}  \vect[X]&=[Q_S \otimes Q_F  ]\diag(\vect[G])[Q_S \T \otimes Q_F\T]\vect[X]
		\\ & =[Q_S \otimes Q_F  ]\vect[G \circ (Q_F\T XQ_S)]
		\\ & =\vect[ Q_F  (G \circ (Q_F\T XQ_S))Q_S\T] \in \RR^{m  n} 
	\end{align*}
	Therefore,
	$$
	Z_{X,F} = Q_F  (G \circ (Q_F\T XQ_S))Q_S\T \in \RR^{m \times n}. 
	$$
	Moreover, using the form of $U^{-1}$,
	\begin{align*}
		\frac{\partial L(B,F)}{\partial \vect[F]} &=\gamma \frac{\partial \ell_Y(\mathbf{f}_{X,F,B}) }{\partial \vect[\mathbf{f}_{X,F,B}]}[I_n \otimes B]U^{-1} \left[SZ_{X,F}\T\otimes I_{m} \right]\frac{\partial \vect[g(F)] }{\partial  \vect[F]}
		\\ & =\gamma \frac{\partial \ell_Y(\mathbf{f}_{X,F,B}) }{\partial \vect[\mathbf{f}_{X,F,B}]}[I_n \otimes B][Q_S \otimes Q_F  ]\diag(\vect[G])[Q_S \T \otimes Q_F\T] \left[SZ_{X,F}\T\otimes I_{m} \right]\frac{\partial \vect[g(F)] }{\partial  \vect[F]}
		\\ & =\gamma \frac{\partial \ell_Y(\mathbf{f}_{X,F,B}) }{\partial \vect[\mathbf{f}_{X,F,B}]}[Q_S \otimes B Q_F  ]\diag(\vect[G]) \left[Q_S \T SZ_{X,F}\T\otimes Q_F\T \right]\frac{\partial \vect[g(F)] }{\partial  \vect[F]}.
	\end{align*}
	Thus,
	\begin{align*}
		\nabla_{\vect[F]}  L(B,F) &= \left(\frac{\partial L(B,F)}{\partial \vect[F]}\right)\T
		\\ &= \gamma\left(\frac{\partial \vect[g(F)] }{\partial  \vect[F]} \right)\T  \left[Q_S \T SZ_{X,F}\T\otimes Q_F\T]\T\diag(\vect[G])] \right[Q_S \otimes B Q_F  ]\T \left(\frac{\partial \ell_Y(\mathbf{f}_{X,F,B}) }{\partial \vect[\mathbf{f}_{X,F,B}]}\right)\T
		\\ & = \gamma\left(\frac{\partial \vect[g(F)] }{\partial  \vect[F]} \right)\T  \left[Z_{X,F} SQ_S  \otimes Q_F]\diag(\vect[G])] \right[Q_S\T \otimes Q_F\T B\T] \vect\left[\frac{\partial \ell_Y(\mathbf{f}_{X,F,B}) }{\partial \mathbf{f}_{X,F,B}}\right] \\ & = \gamma\left(\frac{\partial \vect[g(F)] }{\partial  \vect[F]} \right)\T [Z_{X,F} SQ_S  \otimes Q_F]\diag(\vect[G])]\vect\left[Q_F\T B\T  \frac{\partial \ell_Y(\mathbf{f}_{X,F,B}) }{\partial \mathbf{f}_{X,F,B}} Q_{S} \right]
		\\ & = \gamma\left(\frac{\partial \vect[g(F)] }{\partial  \vect[F]} \right)\T [Z_{X,F} SQ_S  \otimes Q_F]\vect\left[G \circ\left( Q_F\T B\T  \frac{\partial \ell_Y(\mathbf{f}_{X,F,B}) }{\partial \mathbf{f}_{X,F,B}} Q_{S}\right) \right]
		\\ & = \gamma\left(\frac{\partial \vect[g(F)] }{\partial  \vect[F]} \right)\T \vect\left[Q_F\left(G \circ\left( Q_F\T B\T  \frac{\partial \ell_Y(\mathbf{f}_{X,F,B}) }{\partial \mathbf{f}_{X,F,B}} Q_{S}\right) \right) Q_S\T SZ_{X,F}\T   \right]  \in \RR^{mm}
	\end{align*}

\subsection{Derivation of Equation \eqref{eq: gradients of F (eigen_v2)}}
\label{appendix: derivation (eigen_v2)}
	By using chain rule,
	\begin{align*}
		\frac{\partial \vect[g(F)]}{\partial \vect[F]} &= \frac{\partial}{\partial \vect[F]} \frac{1}{\|F\T F\|_{\Fmathrm} + \epsilon_F} \vect[F\T F]
		\\ &= \frac{\partial}{\partial \vect[F]} \frac{1}{\sqrt{\vect[F\T F]\T \vect[F\T F]}+ \epsilon_F} \vect[F\T F]
		\\ & =  \left( \left. \frac{\partial\frac{1}{\sqrt{v\T v}+ \epsilon_F} v}{\partial v}  \right\vert_{v=\vect[F\T F]} \right)  \frac{\partial\vect[F\T F]}{\partial \vect[F]}.
	\end{align*}
	For the first term,  using chain rule,
	\begin{align*}
		\frac{\partial\frac{1}{\sqrt{v\T v}+ \epsilon_F} v}{\partial v} &=\left( \left. \frac{\partial a v}{\partial a}  \right\vert_{a=\frac{1}{\sqrt{v\T v}+ \epsilon_F}}  \right)\left( \frac{\partial \frac{1}{\sqrt{v\T v}+ \epsilon_F}}{\partial v} \right) + \left. \frac{\partial a v}{\partial v}  \right\vert_{a=\frac{1}{\sqrt{v\T v}+ \epsilon_F}}
		\\ & =v \left( \frac{\partial \frac{1}{\sqrt{v\T v}+ \epsilon_F}}{\partial v} \right)  +\frac{1}{\sqrt{v\T v}+ \epsilon_F} I_{mm} 
		\\ & =v \left( \left. \frac{\partial a^{-1}}{\partial a} \right\vert_{a=\sqrt{v\T v}+ \epsilon_F}\right) \left( \frac{\partial\sqrt{v\T v}+ \epsilon_F }{\partial v } \right) +\frac{1}{\sqrt{v\T v}+ \epsilon_F} I_{mm} 
		\\ & =v \left(  -(\sqrt{v\T v}+ \epsilon_F)^{-2} \right) \left( \frac{\partial\sqrt{v\T v} }{\partial v } \right) +\frac{1}{\sqrt{v\T v}+ \epsilon_F} I_{mm} 
		\\ & =v \left(  -(\sqrt{v\T v}+ \epsilon_F)^{-2} \right) \left( \left.\frac{\partial\sqrt{a} }{\partial a } \right\vert_{a=v\T v} \right) \left(\frac{\partial v\T v }{\partial v } \right) +\frac{1}{\sqrt{v\T v}+ \epsilon_F} I_{mm} 
		\\ & =v \left(  -(\sqrt{v\T v}+ \epsilon_F)^{-2} \right) \left( \frac{1}{2} (v\T v)^{-1/2}  \right) \left(2v\T \right) +\frac{1}{\sqrt{v\T v}+ \epsilon_F} I_{mm} 
		\\ & =- \frac{1}{(\sqrt{v\T v}+ \epsilon_F)^2} \frac{1}{\sqrt{v\T v}} v v\T +\frac{1}{\sqrt{v\T v}+ \epsilon_F} I_{mm}
		\\ & = \frac{1}{\sqrt{v\T v}+ \epsilon_F}\left(  I_{mm} -\frac{1}{(\sqrt{v\T v}+ \epsilon_F)\sqrt{v\T v}} v v\T \right)
	\end{align*}
	Thus, 
	$$
	\left. \frac{\partial\frac{1}{\sqrt{v\T v}+ \epsilon_F} v}{\partial v}  \right\vert_{v=\vect[F\T F]} = \frac{1}{\|F\T F\|_{\Fmathrm}+ \epsilon_F} \left(I_{mm}- \frac{1}{(\|F\T F\|_{\Fmathrm}+ \epsilon_F)\|F\T F\|_{\Fmathrm}}  \vect[F\T F]\vect[F\T F]\T  \right)
	$$
	For the second term,
	\begin{align*}
		\frac{\partial F\T F}{\partial F_{ij}}= F\T \Delta^{ij} + \Delta^{ji} F,
	\end{align*}
	where $\Delta^{ij} \in \RR^{m \times m}$ is the matrix with the $(i,j)$-th entry being one and all other entries being zero.
	Using vectorization and the square commutation matrix $K^{(m,m)}$,
	\begin{align*}
		\frac{\partial \vect[F\T F]}{\partial F_{ij}}&=\vect[ F\T \Delta^{ij} ]+ \vect[\Delta^{ji} F]
		\\ & =\vect[ F\T \Delta^{ij} ]+ \vect[(F\T(\Delta^{ji})\T )\T]
		\\ & =\vect[ F\T \Delta^{ij} ]+ \vect[(F\T\Delta^{ij} )\T]
		\\ & =\vect[ F\T \Delta^{ij} ]+ K^{(m,m)}\vect[F\T\Delta^{ij} ]
		\\ & =(I_{mm}+ K^{(m,m)})\vect[ F\T \Delta^{ij} ]
		\\ & =(I_{mm}+ K^{(m,m)})[I_{m} \otimes F\T] \vect[\Delta^{ij} ]. 
	\end{align*}
	Thus,
	\begin{align*}
		&\frac{\partial \vect[F\T F]}{\partial \vect[F]}
		\\ &=(I_{mm}+ K^{(m,m)})[I_{m} \otimes F\T] [\vect[\Delta^{11}], \dots, \vect[\Delta^{m1}],\vect[\Delta^{12}],\dots,\vect[\Delta^{m2}],\cdots,\vect[\Delta^{1m}],\dots,\vect[\Delta^{mm}]]    \\  & =(I_{mm}+ K^{(m,m)})[I_{m} \otimes F\T]I_{mm} 
		\\  & =(I_{mm}+ K^{(m,m)})[I_{m} \otimes F\T].
	\end{align*}
	Combining the first term and second term,
	\begin{align*}
		&\frac{\partial \vect[g(F)]}{\partial \vect[F]} 
		\\ &= \frac{1}{\|F\T F\|_{\Fmathrm}+ \epsilon_F} \left(I_{mm}- \frac{1}{(\|F\T F\|_{\Fmathrm}+ \epsilon_F)\|F\T F\|_{\Fmathrm}}  \vect[F\T F]\vect[F\T F]\T  \right) (I_{mm}+ K^{(m,m)})[I_{m} \otimes F\T].
	\end{align*}
	Since the   square commutation matrix
	is symmetric, using the definition of the commutation matrix,\begin{align*}
		\vect[F\T F]\vect[F\T F]\T  K^{(m,m)} &=\vect[F\T F]\vect[F\T F]\T  (K^{(m,m)})\T  \\ & = \vect[F\T F](K^{(m,m)}\vect[F\T F])\T
		\\ & =\vect[F\T F](\vect[(F\T F)\T])\T 
		\\ & =\vect[F\T F]\vect[F\T F]\T 
	\end{align*}
	Therefore, 
	$$
	\vect[F\T F]\vect[F\T F]\T (I_{mm}+ K^{(m,m)})=2\vect[F\T F]\vect[F\T F]\T. 
	$$
	Using this,
	\begin{align*}
		&\frac{\partial \vect[g(F)]}{\partial \vect[F]} 
		\\ &= \frac{1}{\|F\T F\|_{\Fmathrm}+ \epsilon_F} \left(I_{mm}- \frac{1}{(\|F\T F\|_{\Fmathrm}+ \epsilon_F)\|F\T F\|_{\Fmathrm}}  \vect[F\T F]\vect[F\T F]\T  \right) (I_{mm}+ K^{(m,m)})[I_{m} \otimes F\T]
		\\ & = \frac{1}{\|F\T F\|_{\Fmathrm}+ \epsilon_F} \left(I_{mm}+ K^{(m,m)}- \frac{2}{(\|F\T F\|_{\Fmathrm}+ \epsilon_F)\|F\T F\|_{\Fmathrm}}  \vect[F\T F]\vect[F\T F]\T  \right) [I_{m} \otimes F\T] 
		\\ & = \frac{1}{\|F\T F\|_{\Fmathrm}+ \epsilon_F} (I_{mm}+ K^{(m,m)} )[I_{m} \otimes F\T]- \frac{2}{(\|F\T F\|_{\Fmathrm}+ \epsilon_F)^{2}\|F\T F\|_{\Fmathrm}}  \vect[F\T F]\vect[F\T F]\T  [I_{m} \otimes F\T]  
		\\ & = \frac{1}{\|F\T F\|_{\Fmathrm}+ \epsilon_F} (I_{mm}+ K^{(m,m)} )[I_{m} \otimes F\T]- \frac{2}{(\|F\T F\|_{\Fmathrm}+ \epsilon_F)^{2}\|F\T F\|_{\Fmathrm}}  \vect[F\T F]([I_{m} \otimes F]\vect[F\T F])\T  
		\\ & = \frac{1}{\|F\T F\|_{\Fmathrm}+ \epsilon_F} (I_{mm}+ K^{(m,m)} )[I_{m} \otimes F\T]- \frac{2}{(\|F\T F\|_{\Fmathrm}+ \epsilon_F)^{2}\|F\T F\|_{\Fmathrm}}  \vect[F\T F]\vect[FF\T F]\T
	\end{align*}
	By using this form of $\frac{\partial \vect[g(F)]}{\partial \vect[F]}$ and by defining $R=Q_F\left(G \circ\left( Q_F\T B\T  \frac{\partial \ell_Y(\mathbf{f}_{X,F,B}) }{\partial \mathbf{f}_{X,F,B}} Q_{S}\right) \right) Q_S\T SZ_{X,F}\T$ ,
	\begin{align*}
		& \nabla_{\vect[F]}  L(B,F)
		\\ &= \gamma\left(\frac{\partial \vect[g(F)] }{\partial  \vect[F]} \right)\T \vect\left[R   \right],
		\\ & =\gamma \left( \frac{1}{\|F\T F\|_{\Fmathrm}+ \epsilon_F} (I_{mm}+ K^{(m,m)} )[I_{m} \otimes F\T]- \frac{2}{(\|F\T F\|_{\Fmathrm}+ \epsilon_F)^{2}\|F\T F\|_{\Fmathrm}}  \vect[F\T F]\vect[FF\T F]\T \right)\T \vect\left[R   \right]\\ \\ & =\gamma \left( \frac{1}{\|F\T F\|_{\Fmathrm}+ \epsilon_F} [I_{m} \otimes F](I_{mm}+ K^{(m,m)} )- \frac{2}{(\|F\T F\|_{\Fmathrm}+ \epsilon_F)^{2}\|F\T F\|_{\Fmathrm}}  \vect[FF\T F]\vect[F\T F]\T \right) \vect\left[R   \right] \\ & = \left( \frac{\gamma}{\|F\T F\|_{\Fmathrm}+ \epsilon_F} [I_{m} \otimes F](I_{mm}+ K^{(m,m)} )- \frac{2\gamma}{(\|F\T F\|_{\Fmathrm}+ \epsilon_F)^{2}\|F\T F\|_{\Fmathrm}}  \vect[FF\T F]\vect[F\T F]\T \right) \vect\left[R   \right]
	\end{align*}
	We compute each of cross terms:
	\begin{align*}
		[I_{m} \otimes F]I_{mm} \vect\left[R   \right] =\vect\left[F R   \right]
	\end{align*}
	\begin{align*}
		[I_{m} \otimes F]K^{(m,m)} \vect\left[R   \right]  = [I_{m} \otimes F]\vect\left[ R\T  \right]
		=   \vect\left[ FR\T  \right]
	\end{align*}  
	\begin{align*}
		\vect[FF\T F] \vect[F\T F]\T \vect\left[R   \right] =\vect[FF\T F]   \left\langle F\T F ,  R \right\rangle _{\Fmathrm}
	\end{align*}
	Using these,
	since vectorization $\vect$ is a linear map,
	\begin{align*}
		& \nabla_{\vect[F]}  L(B,F)
		\\ & = \left( \frac{\gamma}{\|F\T F\|_{\Fmathrm}+ \epsilon_F} [I_{m} \otimes F](I_{mm}+ K^{(m,m)} )- \frac{2\gamma}{(\|F\T F\|_{\Fmathrm}+ \epsilon_F)^{2}\|F\T F\|_{\Fmathrm}}  \vect[FF\T F]\vect[F\T F]\T \right) \vect\left[R   \right] \\ & =\frac{\gamma}{\|F\T F\|_{\Fmathrm}+ \epsilon_F} (\vect\left[F R   \right]+\vect\left[ FR\T  \right])- \frac{2\gamma   \left\langle F\T F ,  R \right\rangle _{\Fmathrm}}{(\|F\T F\|_{\Fmathrm}+ \epsilon_F)^{2}\|F\T F\|_{\Fmathrm}} \vect[FF\T F]
		\\ & =\frac{\gamma}{\|F\T F\|_{\Fmathrm}+ \epsilon_F} \vect\left[F R   +FR\T\right]- \frac{2\gamma   \left\langle F\T F ,  R \right\rangle _{\Fmathrm}}{(\|F\T F\|_{\Fmathrm}+ \epsilon_F)^{2}\|F\T F\|_{\Fmathrm}} \vect[FF\T F]
		\\ & =\frac{\gamma}{\|F\T F\|_{\Fmathrm}+ \epsilon_F} \vect\left[F (R   +R\T)\right]- \frac{2\gamma   \left\langle F\T F ,  R \right\rangle _{\Fmathrm}}{(\|F\T F\|_{\Fmathrm}+ \epsilon_F)^{2}\|F\T F\|_{\Fmathrm}} \vect[FF\T F]
	\end{align*}
	Therefore,
	$$
	\nabla_{\vect[F]}  L(B,F) =\frac{\gamma}{\|F\T F\|_{\Fmathrm}+ \epsilon_F} \vect\left[F (R   +R\T)\right]- \frac{2\gamma   \left\langle F\T F ,  R \right\rangle _{\Fmathrm}}{(\|F\T F\|_{\Fmathrm}+ \epsilon_F)^{2}\|F\T F\|_{\Fmathrm}} \vect[FF\T F] 
	$$
	\begin{align*}
		\nabla_{F}  L(B,F) &=\frac{\gamma}{\|F\T F\|_{\Fmathrm}+ \epsilon_F} F (R   +R\T)- \frac{2\gamma   \left\langle F\T F ,  R \right\rangle _{\Fmathrm}}{(\|F\T F\|_{\Fmathrm}+ \epsilon_F)^{2}\|F\T F\|_{\Fmathrm}} FF\T F
		\\ & =\frac{\gamma}{\|F\T F\|_{\Fmathrm}+ \epsilon_F} F\left( (R   +R\T) -\frac{2   \left\langle F\T F ,  R \right\rangle _{\Fmathrm}}{\|F\T F\|_{\Fmathrm}^2+ \epsilon_F\|F\T F\|_{\Fmathrm}} F\T F  \right) 
	\end{align*}
\section{More on Experiments}
\label{appendix: details of experiments}
\subsection{Datasets}
\paragraph{Synthetic chains datasets}
To evaluate the ability of models to capture information from distant nodes, we construct synthetic chains datasets as in \citet{IGNN}. 
In our experiments, we consider both binary classification and multiclass classification. Assuming the number of classes is $c$, we then have $c$ types of chains and the information of the label class is only encoded as a one-hot vector in the first $c$ dimensions of the node feature vector of the starting end node of the chain. 
With $c$ classes, $n_c$ chains for each class, and $l$ nodes in each chains, the chain dataset has $c\times n_c \times l$ nodes in total.
We choose 20 chains for each class and $c=5$ for multiclass classification setting. The train set consists 5\% nodes while the validation and test set contain 10\% and 85\% nodes respectively. 

\paragraph{Real-world datasets}
In our real-world experiments, following \citet{Pei2020Geom-GCN}, we use the following real-world datasets: 
\begin{itemize}
    \item \textbf{Cornell, Texas and Wisconsin} are web-page graphs of the corresponding universities, where nodes are web pages and edges represent hyper-links between web pages. There are 5 label classes: faculty, student, course, project and staff. These datasets are originally collected by the CMU WebKB project \footnote{\url{http://www.cs.cmu.edu/~webkb/}}. In our experiments, we use the preprocessed version in \citet{Pei2020Geom-GCN}.
    \item \textbf{Chameleon and Squirrel} are graphs of web pages in Wikipedia of the corresponding topic, originally collected by \citep{rozemberczki2021multi}. We use the labels generated by \citet{Pei2020Geom-GCN}, where the nodes is classified into 5 categories using the amount of their average monthly traffic. 
\end{itemize}

In addition to the above single-graph datasets, we also use Protein-Protein Interaction (PPI) dataset, which contains multiple graphs, to show that EIGNN is applicable to multi-label multi-graph inductive learning setting. PPI dataset has 24 graphs in total and each graph corresponds to a different human issue. In a graph, nodes represents proteins and edges indicates interaction between proteins. Each node can have at most 121 labels, which is originally collected from the Molecular Signatures Database \citep{subramanian2005gene} by \citet{Graphsage}. We also follow the data splits used in \citep{Graphsage}, i.e., 20 graphs for training, two graphs for validation, and the rest two graphs for testing. 
\subsection{Experimental setting}
\label{appendix: experimental-setting}
\paragraph{Node classification} 
For node classification task on synthetic and real-world datasets, we mainly choose 9 representative baselines to compare with EIGNN: Graph Convolution Network (GCN) \citep{semi_GCN}, Simple Graph Convolution (SGC) \citep{SGC}, Graph Attention Network \citep{GAT}, Jumping Knowledge Network (JKNet) \citep{JKNet}, APPNP \citep{APPNP}, GCNII \citep{GCNII}, and H2GCN \citep{H2GCN}. The presented results are averaged by 20 different runs.
\paragraph{Noise sensitivity}
In synthetic experiments, we add uniform noise $\epsilon \sim \mathcal{U}(-\alpha, \alpha)$ to node features for constructing synthetic datasets with noises.
Then, EIGNN and IGNN are trained and evaluated on these datasets with $\alpha=0.01, 0.1$. The results are averaged by 10 different runs. 

In real-world experiments, we compare the robustness of H2GCN, EIGNN, and IGNN against adversarial perturbations on node features. 
We evaluate the models on evasive setting, i.e., the perturbations are added after the model is trained. 
We use the trained models of H2GCN, EIGNN, and IGNN with their best performance on Cornell, Texas, and Wisconsin. 
For each node which is correctly classified, we use two classic methods, i.e., Fast Gradient Sign Method (FGSM) \citep{FGSM} and Projected Gradient Descent (PGD) \citep{PGD}, to add perturbations to node features. 
For FGSM, we add perturbation $\epsilon \in (0.0001, 0.001, 0.01)$.
For PGD, we run 15 iterations with different step sizes. Different combinations of perturbation $\epsilon$ and step size $\alpha$ are used: $(0.01, 0.001), (0.001, 0.0001), (0.0001, 10^{-5})$.

\paragraph{Hyperparameter setting}
To avoid bias, we tune the hyperparameters for each baseline on each real-world dataset. We report the best performance, using the set of hyperparameters which performs the best on the validation set, for each method. 
For baselines, besides the hyperparameters suggested in their papers, we also conduct a hyperparameter search on learning rate \{0.001, 0.05, 0.01, 0.1\} and weight decay \{5e-4, 5e-6\}.
For GAT, 8 attention heads are used. 
For APPNP, 10 propagation layers are used as suggested in \citep{APPNP}. 
On university datasets (i.e., Cornell, Texas, and Wisconsin), for our model EIGNN, we set the learning rate as 0.5, the weight decay as 5e-6, and $\gamma = 0.8$.
On Wikipedia datasets (i.e., Chameleon and Squirrel), we employ batch normalization between the infinite-depth layer and the final linear transformation. After that, the dropout is used with parameter 0.5. The hyperparameter search space is set as follows: learning rate \{1e-4, 1e-3, 1e-2\}, weight decay \{5e-4, 5e-6\}.

\paragraph{Hardware specifications}
We run experiments on a machine with Intel(R) Xeon(R) Gold 6240 CPU @ 2.60GHz and a single GeForce RTX 2080 Ti GPU with 11 GB GPU memory. 

\subsection{Finite-depth IGNN v.s. Infinite-depth IGNN}
\label{appendix: loss-trajecories}
\begin{figure*}[h]
	\begin{subfigure}[b]{0.5\textwidth}
		\centering
		\includegraphics[width=1\textwidth]{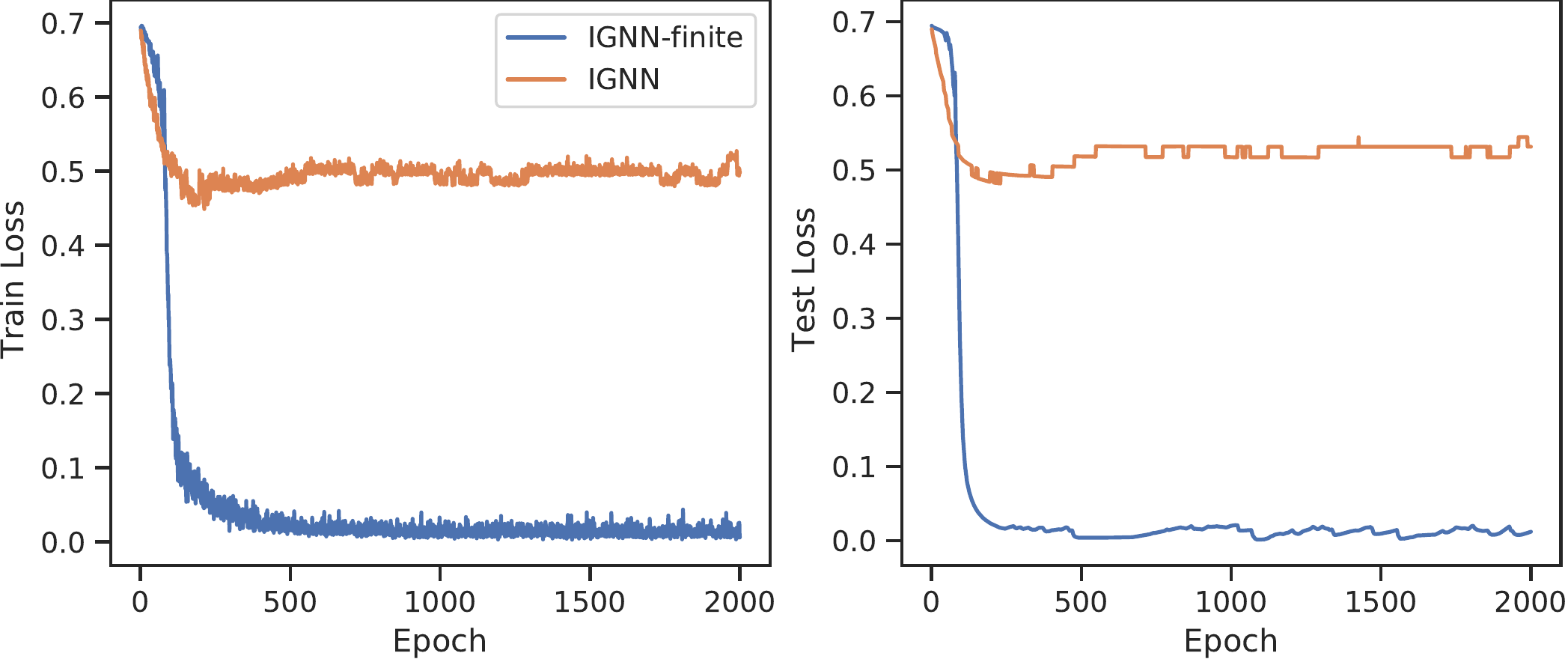}
		\caption{On the dataset with 50-length chains.}
	\end{subfigure}
	\begin{subfigure}[b]{0.5\textwidth}
		\centering
		\includegraphics[width=1\textwidth]{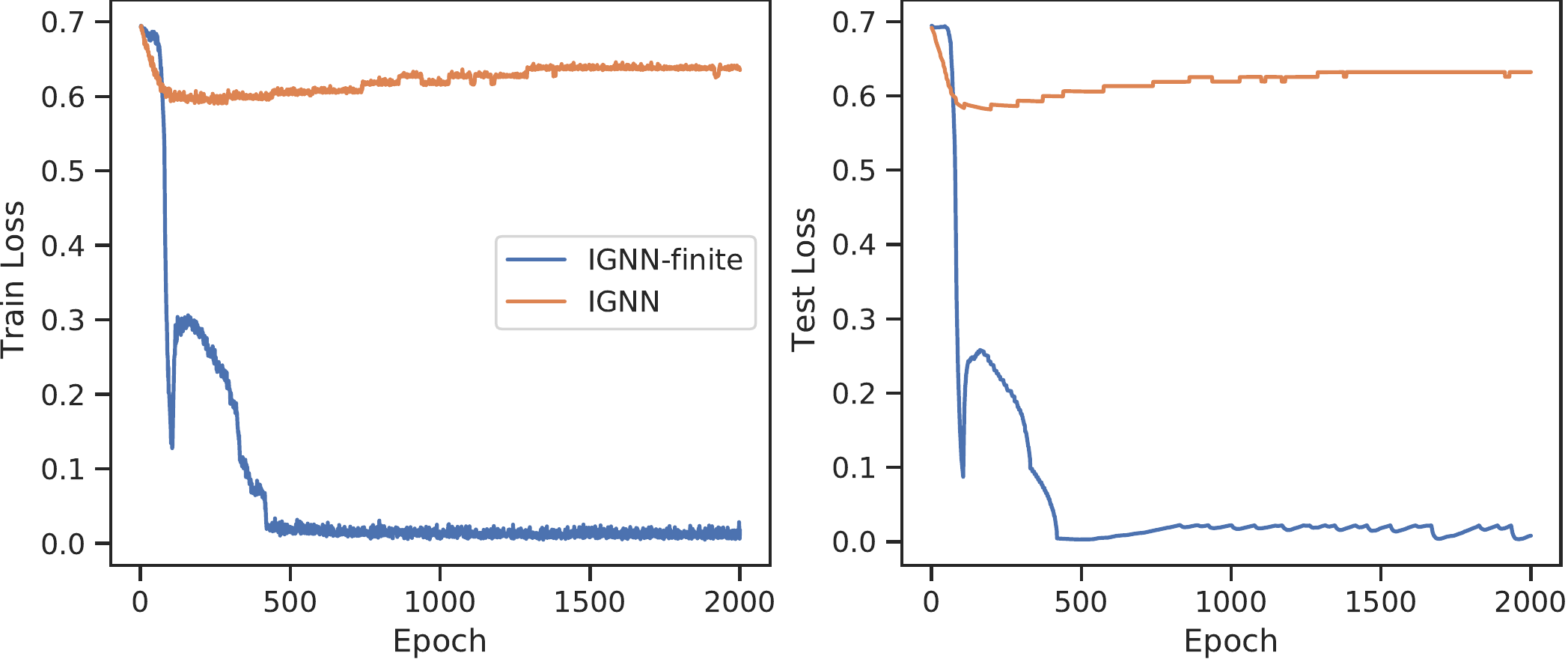}
		\caption{On the dataset with 100-length chains.}
	\end{subfigure}
	\caption{Train and Test losses versus the number of epochs for IGNN and IGNN-finite on datasets with different chain length.}
	\label{fig: loss-trajectories}
\end{figure*}
In Figure \ref{fig: loss-trajectories}, we show the train and test loss trajectories of IGNN and IGNN-finite on datasets with different chain lengths.
It is clear that IGNN-finite can achieve both lower train and test loss compared with IGNN. 
For IGNN, the train and test losses both cannot effectively decrease and their trajectories are quite similar, which indicates the issue is that IGNN cannot be effectively learned instead of overfitting problem. 
The reason why IGNN cannot be effectively learned is that the iterative solver usually generated approximated solutions. 
To be specific, the error yielded by the iterative solver in the forward pass would be further amplified in the backward pass. 
Thus, approximation errors make IGNN cannot be optimized well, which leads to the inferior performance. 
\subsection{Statistical significance}
In our experiments, we conduct multiple runs to get the averaged results. However, for clarity purpose, we omit the error bars on figures represented in Section \ref{sec: experiments}. To test whether the differences are statistically significant, we calculate T-tests for two result series. For Figure \ref{fig: multiclass-chains}, comparing IGNN and EIGNN, the p-values are all smaller than 0.001 for all chain lengths. Comparing IGNN and APPNP (the third-best one), the p-values are all smaller than 0.05 when the chain length is larger than 10. For Figure \ref{fig: binary-chains}, comparing IGNN and EIGNN, the p-values are all smaller than 0.05 when the chain length is larger than 20. Comparing IGNN and APPNP (the third-best one), the p-values are all smaller than 1e-4 for all chain lengths. For Figure \ref{fig: finite_vs_inf}, comparing IGNN and IGNN-fininte, the p-values are smaller than 0.05 when the chain length is larger than 40. Based on the above results, we can conclude the differences are statistically significant.
\subsection{Oversmoothing}
\label{appendix: oversmoothness}
\begin{figure*}
    \centering
    \includegraphics[width=0.5\textwidth]{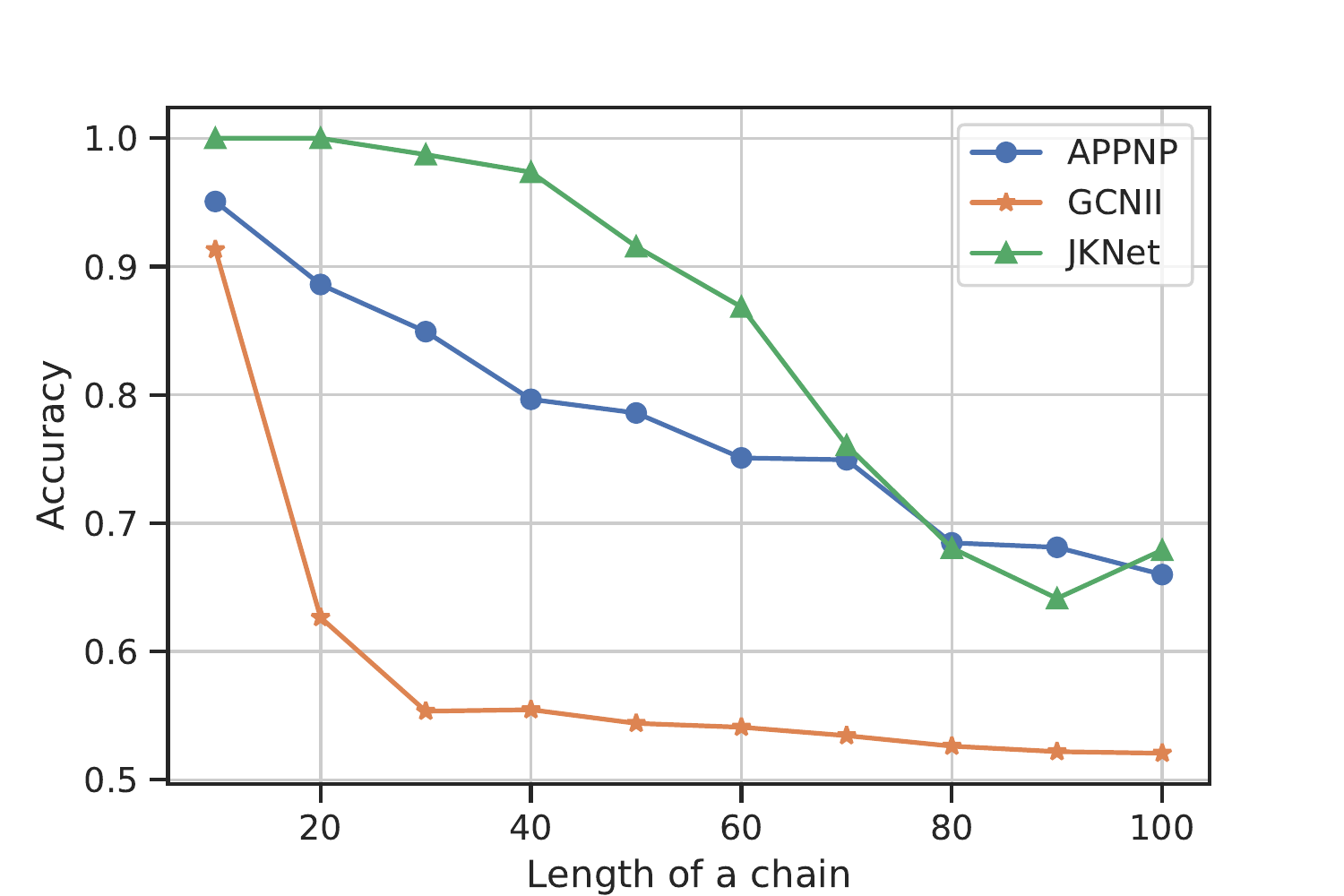}
    \caption{oversmoothness}
    \label{fig:oversmoothness}
\end{figure*}
A straightforward way to help GNNs capture long-range dependencies is stacking layers. With $T$ hidden layers, GNN models are supposed to capture the information within $T$-hop neighborhoods. 
However, stacking many layers for traditional GNN models like GCN, SGC, GAT cause oversmoothing. 
Several works, i.e., APPNP \citep{APPNP}, JKNet \citep{JKNet}, and GCNII \citep{GCNII}, are proposed to mitigate oversmoothing problem. 
Nevertheless, we demonstrate that they still cannot perfectly capture long-range dependencies. 
Figure \ref{fig:oversmoothness} shows the results of APPNP, JKNet and GCNII on datasets with different chain lengths. On the dataset with $T$-length chains, we set the number of propagation layers of these models as $T$. Ideally, with $T$ layers, they should capture the label information from distant nodes and achieve 100\% test accuracy. 
As shown in Figure \ref{fig:oversmoothness}, their performance drops when the chain length increases. 
The reason might be these model still cannot completely avoid oversmoothing, which makes them ineffectively capture long-range dependencies. 

\section{More discussions about limitations}
\label{appendix: limitations} 
Apart from training, EIGNN conduct eigendecomposition of $S$ which is a one-time preprocessing operation for each graph. The time complexity and memory complexity of a plain full eigendecomposition algorithm are $O(n^3)$ and $O(n^2)$, respectively. It may limit the applicability of EIGNN on some cases where the complexity of this preprocessing is of importance. 
However, we can consider using truncated eigendecomposition (i.e., use top k eigenvalues and the corresponding eigenvectors) to reduce the time and memory complexity. As the natural sparsity of graphs, using truncated eigendecomposition is reasonable. 

Furthermore, another possible solution to mitigate the cost of eigendecomposition of $S$ is to use graph coarsening to reduce the size of graph or graph partition to partition a large graph into several small graphs without losing much information. Several works have proposed on this topics \citep{cai2021graph, huang2021coarseninggcn}. Worth note that graph partition is also used in a well-known work Cluster-GCN [5] which focuses on scaling up GCNs to a large graph with millions nodes. Cluster-GCN splits a given graph into k non-overlapping subgraphs. Therefore, we believe that graph partition is indeed a potential solution for mitigating the scalability problem in our work. How to scale up implicit graph model for large graphs is an interesting topic to explore in the future. We leave detailed analyses and empirical experiments for future research.


Another potential limitation is that EIGNN can be slow when the number of feature dimensions is very large (e.g., 1 million), which is rare in practice. 
Future work could propose a new model which allows for efficient training even with large feature dimension. 
\end{document}